\newcommand*\samethanks[1][\value{footnote}]{\footnotemark[#1]}
\algnewcommand{\IFor}[1]{\State\algorithmicfor\ #1\ \algorithmicdo}
\algnewcommand{\EndIFor}{\unskip\ \algorithmicend\ }
\DeclareSymbolFont{boldlargesymbols}{LMX}{ntxexx}{b}{n}
\DeclareMathAccent{\bwidetilde}{\mathord}{boldlargesymbols}{"65}
\newtheorem{thm}{Theorem}[]
\newtheorem{cla}[thm]{Claim}
\newtheorem{pro}[thm]{Proposition}
\newcommand{\X}{\ensuremath{\mathbf{X}}}
\newcommand{\B}{\ensuremath{\mathbf{B}}}
\newcommand{\V}{\ensuremath{\mathbf{V}}}
\newcommand{\x}{\ensuremath{\boldsymbol{x}}}
\newcommand{\bo}{\ensuremath{\boldsymbol{b}}}
\newcommand{\y}{\ensuremath{\boldsymbol{y}}}
\newcommand{\prob}{\ensuremath{\mathsf{Pr}}}
\newcommand{\theory}{\ensuremath{\Delta}}
\newcommand{\lra}{\mathcal{LRA}}
\newcommand{\bigO}{\mathcal{O}}
\newcommand{\WMI}{\ensuremath{\mathsf{WMI}}}
\newcommand{\WMIR}{\ensuremath{\mathsf{WMI}_{\mathbb{R}}}}
\newcommand{\MI}{\ensuremath{\mathsf{MI}}}
\newcommand{\graph}{\mathcal{G}}
\newcommand{\smtlra}{SMT($\lra$)}
\newcommand{\id}[1]{\llbracket{#1}\rrbracket}
\newcommand{\msg}[3]{\mathsf{m}_{{#1} \rightarrow {#2}}^{#3}}
\newcommand{\ch}{\ensuremath{\mathsf{ch}}}
\newcommand{\neigh}{\ensuremath{\mathsf{neigh}}}
\newcommand{\bel}{\ensuremath{\mathsf{b}}}
\newcommand{\up}{+}
\newcommand{\down}{-}
\newcommand{\lit}{\ell}
\newcommand{\dif}[1]{d{#1}}
\newcommand{\Va}{\mathcal{V}}
\newcommand{\E}{\mathcal{E}}
\newcommand{\vv}[1]{\boldsymbol{#1}}
\newcommand{\midlinewidth}{1.0pt}
\newcommand{\middist}{20pt}
\newcommand{\normdist}{40pt}
\newcommand{\smalldist}{17pt}
\newcommand{\tinydist}{7pt}
\newcommand{\updist}{12pt}
\newcommand{\varnode}[3][]{
  \node[circle,
        fill=white,
        draw=black,
        inner sep=0pt,
        line width=\midlinewidth, 
        minimum size=19pt, #1] (#2) {#3};
}
\definecolor{petroil2} {RGB} {36, 165, 175}
\newcommand{\faccolor}{petroil2}
\newcommand{\facnode}[3][]{
  \node[%
        circle,
        fill=\faccolor,
        draw=\faccolor,
        inner sep=0pt,
        line width=\midlinewidth, 
        minimum size=12pt, #1] (#2) {\tiny \color{white}#3};
}
\newcommand{\upcolor}{petroil2}
\newcommand{\downcolor}{Rhodamine}
\newcommand{\msgnode}[7][]{
  \node[rectangle,
        fill=white,
        draw=\faccolor,
        inner sep=5pt,
         text width=#7,
        line width=\midlinewidth, 
        minimum size=12pt, #1] (#2) {\tiny #3};
  \node[rectangle,
        fill=white,
        draw=black,
        inner sep=2pt,
        line width=\midlinewidth, 
        above=-1pt of #2.north west,
        xshift=12.5pt,
        text width=20pt] (#6) {
        \begin{varwidth}{1em}
      \tiny$\mathsf{m}_{#4\rightarrow #5} $
    \end{varwidth}
    };
}
\newcommand{\edge}[3][]{ %
  \foreach \x in {#2} { %
    \foreach \y in {#3} { %
      \path (\x) edge [-,line width=\midlinewidth,#1] (\y) ;%
    } ;
  } ;
}
\title{Hybrid Probabilistic Inference with Logical Constraints: Tractability and Message Passing}
\author{
  Zhe Zeng\thanks{Authors contributed equally.}
  \\
  CS Department, StarAI Lab\\
  University of California, LA\\
  \And
  Fanqi Yan\samethanks{}\\
  AMSS\\
  Chinese Academy of Sciences\\
  \And
  Paolo Morettin\samethanks{}\\
  DISI\\
  University of Trento, Italy\\
  \AND
  Antonio Vergari\\
  CS Department, StarAI Lab\\
  University of California, LA\\
  \And
  Guy Van den Broeck\\
  CS Department, StarAI Lab\\
  University of California, LA\\
}
\begin{document}
\maketitle

\begin{abstract}

Weighted model integration (WMI) is a very appealing framework for probabilistic inference: it allows to express the complex dependencies of real-world hybrid scenarios where variables are  heterogeneous in nature (both continuous and discrete) via the language of Satisfiability Modulo Theories (SMT); as well as computing probabilistic queries with complex logical constraints.
Recent work has shown WMI inference to be reducible to a model integration (MI) problem, under some assumptions, thus effectively allowing hybrid probabilistic reasoning by volume computations.
In this paper, we introduce a novel formulation of MI via a message passing scheme that allows to efficiently compute the marginal densities and statistical moments of all the variables in linear time.
As such, we are able to amortize inference for  rich MI queries when they conform to the problem structure, here represented as the primal graph associated to the SMT formula.
Furthermore, we theoretically trace the tractability boundaries of exact MI.
Indeed, we prove that in terms of the structural requirements on the primal graph that make our MI algorithm tractable -- bounding its diameter and treewidth -- the bounds are not only sufficient, but necessary for tractable inference via MI.

\end{abstract}

\section{Introduction}
\label{sec:introduction}

In many real-world scenarios, performing probabilistic inference requires reasoning 
over domains with complex logical constraints while dealing with variables that are heterogeneous in nature, i.e., both continuous and discrete.
Consider for instance an autonomous agent such as a self-driving vehicle. 
It would have to model continuous variables like the speed and position of other vehicles, which are constrained by geometry of vehicles and roads and the laws of physics.
It should also be able to reason over discrete attributes like color of traffic lights and the number of pedestrians.

These scenarios are beyond the reach of probabilistic models like variational autoencoders~\cite{kingma2013auto} and generative adversarial networks~\cite{goodfellow2014generative}, whose inference capabilities, despite their recent success, are severely limited.
Classical probabilistic graphical models~\cite{koller2009probabilistic}, while providing more flexible inference routines, are generally incapacitated when dealing with continuous and discrete variables at once~\cite{shenoy2011inference}, or make simplistic~\cite{heckerman1995learning,lauritzen1989graphical} or overly strong assumptions about their parametric forms~\cite{yang2014mixed}.

Weighted Model Integration (WMI)~\cite{belle2015probabilistic,morettin2017efficient} is a recently introduced framework for probabilistic inference that offers all the aforementioned ``ingredients'' needed for hybrid probabilistic reasoning with logical constraints, \textit{by design}.
First, WMI leverages the expressive representation language of Satisfiability Modulo Theories (SMT)~\cite{barrett2010smt} for describing both a problem (theory) over continuous and discrete variables, and complex logical formulas to query it.
Second, analogously to how Weighted Model Counting (WMC)~\cite{chavira2008probabilistic} enables state-of-the-art probabilistic inference over discrete variables, probabilistic inference over hybrid domains can be carried in a principled way by WMI.
Indeed, parameterizing a WMI problem by the choice of some simple weight functions (e.g., per-literal polynomials)~\cite{belle2015hashing} induces a valid probability distribution over the models of the formula.

These appealing properties motivated several recent works on WMI~\cite{morettin2017efficient,morettin2019advanced,kolb2018efficient,zuidberg2019exact}, pushing the boundaries of state-of-the-art solvers over SMT formulas.
Recently, a polytime reduction of WMI problems to unweighted Model Integration (MI) problems over real variables has been proposed \cite{zeng2019efficient}, opening a new perspective on building such algorithms. 
Solving an MI problem effectively reduces probabilistic reasoning to computing volumes over constrained regions.
In fact, as we will prove in this paper, computing MI is inherently hard, whenever the problem structure, here represented by the primal graph associated to the SMT formula, does not abide by some requirements.

The contribution we make in this work is twofold: we propose an efficient algorithm for exact MI inference and we theoretically trace the requirements for tractable exact MI inference.
First, we devise a novel inference scheme for MI via \textit{message passing} which is able to compute all the variable marginal densities as well as statistical moments at once.
As such, we are able to amortize inference \textit{inter-queries} for rich univariate and bivariate MI queries when they conform to the formula structure, thus going beyond all current exact WMI solvers.
Second, we prove that performing MI is \#P-hard unless the primal graph of the associated SMT formula is a tree and has a balanced diameter.

The paper is organized as follows.
We start by reviewing the necessary WMI and SMT background.
Then, we introduce MI while presenting our message passing scheme in the following section.
Next, we present theoretical results on the hardness of MI, after which we perform experiments.

\section{Background}
\label{sec:background}

\paragraph{Notation.} We use uppercase letters for random variables, e.g.,~$X,B$, and lowercase letters for their assignments e.g.,~$x, b$. 
Bold uppercase letters denote sets of variables, e.g., $\X,\B$, and their lowercase version their assignments, e.g., $\x,\bo$.
We denote with capital greek letters, e.g.,~$\Lambda, \Phi,\Delta$, (quantifier free) logical formulas and literals (i.e., atomic formulas or their negation) with lowercase ones, e.g.,~$\ell,\phi,\delta$.
We denote satisfaction of a formula $\Phi$ by one assignment $\x$ by $\x\models\Phi$ and we use Iverson brackets for the corresponding indicator function, e.g., $\id{\x\models\Phi}$.

\paragraph{Satisfiability Modulo Theories (SMT).} SMT~\cite{barrett2018satisfiability} generalizes the well-known SAT problem~\cite{biere2009handbook} to determining the satisfiability of a logical formula w.r.t.\  a decidable background theory.
Rich mixed logical/algebraic constraints can be expressed in SMT for hybrid domains.
In particular, we consider quantifier-free SMT formulas in the theory of linear arithmetic over the reals, or SMT($\lra$).
Here, formulas are Boolean combinations of atomic propositions (e.g., $\alpha$, $\beta$), and of atomic $\lra$ formulas over real variables (e.g., $\ell:X_{i} < X_{j} + 5$), for which satisfaction
is defined in an obvious way.
W.l.o.g. we assume SMT formulas to be in conjunctive normal form (CNF) (see Figure~\ref{fig:wmi-mi} for some examples). 

In order to characterize the dependency structure of an \smtlra~ formula as well as the hardness of inference, 
we denote the \textit{primal graph}~\cite{dechter2007and} of an SMT($\lra$) formula by $\graph_{\theory}$
, as the undirected graph whose vertices are all the variables in $\theory$ and whose edges connect any two variables that appear together in at least one clause in $\theory$.
In the next sections, we will extensively refer to the \textit{diameter} and \textit{treewidth} of a primal graph which are defined as usual for undirected graphs~\cite{koller2009probabilistic}.
Recall that trees have treewidth one.

\paragraph{Weighted Model Integration (WMI).} 
Weighted Model Integration (WMI)~\cite{belle2015probabilistic,morettin2017efficient} provides a framework for probabilistic inference over models defined over the logical constraints given by \smtlra~formulas.
Formally, let $\X$ be a set of continuous random variables defined over $\mathbb{R}$, 
and $\B$ a set of Boolean random variables defined over $\mathbb{B}=\{\top,\bot\}$.
Given an SMT formula $\Delta$ 
over (subsets of) $\X$ and $\B$, a \textit{weight function} $w:($\x$,$\bo$)\mapsto\mathbb{R}^{+}$, the task of computing the
WMI over formula $\theory$, w.r.t. weight function $w$, and variables $\X$ and $\B$ is
defined as: 
\begin{equation}
  \WMI(\theory, w;  \X, \B) \triangleq
  \sum\limits_{\bo \in\mathbb{B}^{|\B|}} 
  \int_{\theory(\x, \bo)} 
  w(\x, \bo) 
  \, d \x,
  \label{eq:wmi}
\end{equation}
that is, summing over all possible Boolean assignments $\bo \in\mathbb{B}^{|\B|}$ while integrating over those assignments of $\X$ such that the evaluation of the formula $\theory(\x, \bo)$ is SAT.
Intuitively, $\WMI(\theory, w;  \X, \B)$ equals the partition function of the unnormalized probability distribution 
induced by weight $w$ on formula $\theory$.
As such, the weight function $w$ acts as an unnormalized probability density 
while the formula $\theory$ represents logical constraints defining its structure.
In the following, we will adopt the shorthand $\WMI(\theory, w)$ for computing the WMI of all the variables in $\theory$.
More generally, the choice of the weight function $w$ can by guided %
by some domain-specific knowledge or efficiency reasons.
We follow the common assumption that the weight function $w$ factorizes over the literals $\ell$ in %
$\theory$
that are satisfied by one joint assignment $(\x, \bo)$, i.e., $w(\x, \bo) = \prod_{\ell:(\x,\bo)\models\ell} w_{\ell}(\x, \bo)$.
Moreover, we adopt \textit{polynomial functions}~\cite{belle2015probabilistic,belle2015hashing,morettin2017efficient} for the per-literal weight $w_{\ell}$.
Note that this induces a global \textit{piecewise polynomial} parametric form for weight $w$, where each piece is defined as the polynomial associated to a region induced by the truth assignments to formula $\theory$~\cite{morettin2017efficient}.
Furthermore, univariate piecewise polynomials can be integrated efficiently 
over given bounds~\cite{de2013software}.

For example, consider the $\WMI$ problem over formula $\theory= (0<X_1<2)\wedge (0<X_2<2) \wedge (X_1+X_2<2) \wedge (B \vee (X_{1} > 1))$ on variables $\X=\{X_{1},X_{2}\},\B=\{B\}$. 
Let the weight function $w$ decompose on per-literal weights as follows:
$w_{\Gamma}(X_{1}, X_{2})=X_{1}X_{2}$, $w_{\Lambda}(X_{1})=2$ and $w_{\Psi}(B)=3$, where
$\Gamma = X_{1} + X_{2} < 2$, $\Lambda = X_{1} > 1$ and $\Psi = B$.
Then, $\WMI(\theory, w; \X, B)$ can be computed as:
\begin{align}
    \label{eq:wmi-ex}
    &\int_0^1 ~d x_{1} \int_0^{2 - x_1} 1\times3x_{1}x_{2} ~d x_{2}
    + &\int_1^2 ~d x_{1} \int_0^{2 - x_1} 2\times3x_{1}x_{2} ~d x_{2}
    + &\int_1^2 ~d x_{1} \int_0^{2 - x_1} 2\times1x_{1}x_{2} ~d x_{2} 
    = \frac{161}{24}.
\end{align}

\paragraph{Model Integration is all you need.}

Recently, \citeauthor{zeng2019efficient} (\citeyear{zeng2019efficient}), showed that a WMI problem can be reduced in poly-time to an Model Integration (MI) problem over continuous variables only.
This reduction is appealing because it allows to perform hybrid probabilistic reasoning with logical constraints in terms of volume computations over polytopes, a well-studied problem for which efficient solvers exist~\cite{de2013software}.
We now briefly review the poly-time reduction of a WMI problem to an MI one.
We refer the readers to~\cite{zeng2019efficient} for a detailed exposition.

First, w.l.o.g., a WMI problem on continuous and Boolean variables of the form $\WMI(\theory, w;  \X, \B)$ can always be reduced to new WMI problem $\WMIR(\theory', w'; \X')$ on continuous variables only.
To do so, we substitute the Boolean variables $\B$ in formula $\theory$ with fresh continuous variables in $\X'$
and replace each Boolean atom and its negation in formula $\theory$ by two exclusive $\lra$
atoms over the new real variables in formula $\theory'$, thus distilling a new weight function $w$ accordingly.
Note that the primal graph of formula $\theory'$ retains its treewidth, e.g., if primal graph $\graph_{\theory}$ is a tree so it is for graph $\graph_{\theory'}$.

Furthermore, $\WMIR$ with polynomial weights $w'$ have equivalent $\MI$ problems 
$\WMIR(\theory', w'; \X') = \MI(\theory''; \X'')$,
with $\X''$ containing auxiliary continuous variables whose extrema of integration are chosen such that their integration is precisely the value of weights $w'$. 
In the case of monomial weights, the treewidth of $\graph_\theory''$ will not increase w.r.t. $\graph_\theory'$. This is not guaranteed for generic polynomial weights. 
A detailed description of these reduction processes is included in Appendix, where we also show the  $\WMIR$ and $\MI$ problems equivalent to the $\WMI$ one in Equation~\ref{eq:wmi-ex}.

\paragraph{Computing MI}
Given a set $\X$ of continuous random variables over $\mathbb{R}$, 
and an SMT($\lra$) formula $\Delta=\bigwedge_{i}\Gamma_{i}$ 
over $\X$, the task of 
MI over formula $\theory$, w.r.t. variables $\X$ is
defined as computing the following integral~\cite{zeng2019efficient}: 
\begin{equation}
    \MI(\theory; \X) \triangleq \int_{\x \models \theory} 1 ~d \x 
    = \int_{\mathbb{R}^{|\X|}}\id{\x\models\theory} ~d\x 
    = \int_{\mathbb{R}^{|\X|}}\prod_{\Gamma\in\theory}\id{\x\models\Gamma} ~d\x.
    \label{eq:mi-vol-I}    
\end{equation}
The first equality can be seen as computing the volume of the constrained regions defined by formula $\theory$, 
and the last one is obtained by eliciting the "pieces" associated to each clause $\Gamma\in\theory$.
Again, in the following we will use the shorthand $\MI(\theory)$ when integrating over all variables in formula $\theory$.

Since we are operating in \smtlra\ and on continuous variables only, we can represent the MI problem as the recursive integration:
\begin{align}
  \MI(\theory) = \int_{\mathbb{R}} ~d x_1  \cdots \int_{\mathbb{R}} ~d x_{i-1} \int_{\mathbb{R}} f_i(x_i) ~d x_i, \quad i=2,\cdots,n.
\end{align}
In a general way, we can always define a univariate piecewise polynomial $f_i$ as a function of the MI over the remaining variables in a recursive way as follow:
\begin{align*}
f_i(x_i) :=
\int_{\mathbb{R}} \id{x_i, x_{i+1} \models \Tilde{\theory}_i}
\cdot f_{i+1}(x_{i+1}) ~d x_{i+1}, \: i \in [1, n-1]
\qquad
f_n(x_n) :=  \id{x_n \models \Tilde{\theory}_n}
\end{align*}
where the formula $\Tilde{\theory}_i := \exists \x_{1:i-1}. \theory$ is defined by the forgetting operation \cite{lin1994forget}. 
So the MI can be expressed as the integration over an arbitrary variable $X_{r}\in\X$ where the integrand $f_r$ is a univariate piecewise polynomial and the pieces are the collection $I$ of intervals of the form $[l,u]$:
\begin{align}
  \MI(\theory) = \int_{\mathbb{R}} f_r(x_r) ~d x_r =\sum\nolimits_{[l,u] \in I} \int_l^u f_{l,u}(x_{r}) d x_{r}.
  \label{eq:mi-pw}
\end{align}

\paragraph{Hybrid inference via MI.} Before moving to our theoretical and algorithmic contributions, we review the kind of probabilistic queries one might want to compute.\footnote{Note that equivalent queries can be defined for $\WMI$ and $\WMIR$ problem formulations.}
Analogously to $\WMI(\Delta, w)$, $\MI(\theory)$ computes the partition function of the induced unnormalized distribution over the models of formula $\theory$.
Therefore, it is possible to compute the (now normalized) probability of any logical \textit{query} $\Phi$ expressable as an \smtlra~ formula involving arbitrarily complex logical and numerical constraints over $\X$ as the ratio
$$\prob_{\theory}(\Phi) = \MI(\theory\wedge\Phi) ~\slash~ \MI(\theory).$$
In the next section, we will show how to compute the probabilities of a collection of rich queries $\{\Phi_{t}\}_{t}$ in a single message-passing evaluation if all $\Phi_{t}$ are univariate formulas, i.e., contain only one variable $X_{i}\in\X$, or bivariate ones conforming to graph $\graph_{\theory}$, i.e., $\Phi_{t}$ contains only $X_{i},X_{j}\in\X$  and they are connected by at least one edge in $\graph_{\theory}$. 
Moreover, one might want to statistically reason about the marginal
distribution of the variables in $\X$, i.e., $p_{\theory}(x_{i})$
which is defined as:
\begin{equation}
p_{\theory}(x_{i}) \triangleq \frac{1}{\MI(\theory)}f_{i}(x_{i})=\frac{1}{\MI(\theory)} \int_{\mathbb{R}^{|\X|-1}}\id{\x\models\theory} ~d \x\setminus\{x_{i}\}.
\label{eq:wmi-marg}
\end{equation}

\section{On the inherent hardness of MI}
\label{sec:new}

It is well-known that for discrete probabilistic graphical models,
the simplest structural requirement to guarantee tractable inference is to bound their treewidth~\cite{koller2009probabilistic}.
For instance, for tree-shaped Bayesian Networks, all exact marginals can be computed at once in polynomial time~\cite{pearl2014probabilistic}.
However, existing WMI solvers show exponential blow-up in their runtime even when the WMI problems have primal graphs with simple tree structures \cite{zeng2019efficient}.
This observation motivates us to trace the theoretical boundaries for tractable probabilistic inference via MI.
As we will show in this section, 
we find out that requiring a MI problem to only have a tree-shaped structure is not sufficient to ensure tractability.
Therefore, inference on MI problems is inherently harder than its discrete-only counterpart.

Specifically, we will show how the hardness of MI depends on two structural properties: the treewidth of the primal graph and the length of its diameter.
To begin, we prove that even for \smtlra~formulas $\theory$ whose primal graphs $\graph_{\theory}$ are trees but have unbounded diameters (i.e., they are unbalanced trees, like paths), computing $\MI(\theory)$ is hard.
This is surprising since for its discrete counterpart, the complexity of model counting problem is exponential in the treewidth but not in the diameter.

\begin{thm}
Computing $\MI(\theory)$ of an SMT($\lra$) formula $\theory$ whose primal graph is a tree with diameter $\bigO(n)$ is \#P-hard, with $n$ being the number of variables.
\label{thm:one}
\end{thm}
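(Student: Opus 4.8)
The plan is to prove \#P-hardness by reducing from a counting problem known to be \#P-complete, namely \#Subset-Sum: given positive integers $a_1,\dots,a_n$ (encoded in binary) and a bound $b$, count the subsets $S\subseteq\{1,\dots,n\}$ with $\sum_{i\in S}a_i\le b$. Since $\MI(\theory)$ is exactly the volume of the region $\{\x : \x\models\theory\}$, it suffices to exhibit, for each such instance, an SMT($\lra$) formula whose primal graph is a path (hence a tree of diameter $\bigO(n)$) and whose associated volume encodes the count. The natural \#P-hard volume to target is that of the knapsack polytope $K=\{\x\in[0,1]^n : \sum_i a_i x_i\le b\}$; the difficulty is that its single global inequality $\sum_i a_i x_i\le b$ couples all variables at once, so its primal graph is a clique rather than a tree.

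First I would serialise the global constraint by introducing running partial-sum variables. Set $S_0:=0$ and, for $i=1,\dots,n$, let $S_i$ play the role of $\sum_{j\le i}a_j x_j$; the box constraint $0\le x_i\le 1$ then becomes the local constraint $S_{i-1}\le S_i\le S_{i-1}+a_i$, and the knapsack inequality becomes $S_n\le b$. Let $\theory$ be the conjunction of these literals over the $n$ variables $S_1,\dots,S_n$. Each clause now mentions only consecutive variables $S_{i-1},S_i$, so $\graph_{\theory}$ is the path $S_1-S_2-\cdots-S_n$: a tree of treewidth one and diameter $n-1=\bigO(n)$, exactly as required. The map $\x\mapsto(S_1,\dots,S_n)$ is linear and lower-triangular with diagonal entries $a_i$, so its Jacobian is the known constant $\prod_i a_i$; hence $\MI(\theory)=\big(\prod_i a_i\big)\cdot\mathrm{vol}(K)$, and a single MI evaluation returns the knapsack volume up to a factor we can divide out.

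It then remains to extract the discrete count from these volumes, which I expect to be the main obstacle. Viewed as a function of the threshold $b$, $\mathrm{vol}(K)$ is a degree-$n$ piecewise polynomial whose breakpoints are the (integer) subset sums, and the jump of its $n$-th derivative at an integer $k$ equals, up to the factor $\prod_i a_i$, the \emph{signed} subset-sum count $\sum_{S:\,\sum_{i\in S}a_i=k}(-1)^{|S|}$ arising from the inclusion--exclusion of the box faces against the hyperplane. To convert these signed counts into the unsigned \#Subset-Sum answer I would preprocess the instance so that all weights are odd (replacing $a_i$ by $2(n+1)a_i+1$ and rescaling the target), which makes the signed and unsigned counts agree up to the global sign $(-1)^k$; the required count is then recovered by interpolating the spline from polynomially many evaluations of $\MI$ at rational thresholds (each evaluation being an MI instance of the above path form) and reading off the appropriate leading-coefficient differences. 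This reduction uses only polynomially many oracle calls and polynomial post-processing, establishing that computing $\MI(\theory)$ on tree-shaped formulas with diameter $\bigO(n)$ is \#P-hard. The path gadget and the Jacobian bookkeeping are routine; the genuinely delicate point is this faithful, polynomial-time bridge from a continuous volume to an exact discrete count, including the sign control forced by inclusion--exclusion.
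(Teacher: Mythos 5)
Your construction is sound in outline but takes a genuinely different route from the paper. Both proofs reduce from counting subset sums over a path-shaped primal graph of diameter $\Theta(n)$, and your partial-sum serialisation $S_{i-1}\le S_i\le S_{i-1}+a_i$ is essentially the same chain gadget the paper uses. The difference is in how the discrete count is pulled out of a volume. The paper puts a \emph{disjunction of two disjoint intervals} inside each clause (``shift by $s_i$'' versus ``don't shift''), so the feasible region becomes a disjoint union of $2^n$ parallelepipeds, each of volume exactly $(1/n)^n$, with the last variable's support being a unit-width window centred at the corresponding subset sum; conjoining with $L-\tfrac12<X_n<L+\tfrac12$ then selects precisely the subsets summing to $L$, and a \emph{single} $\MI$ call, multiplied by $n^n$, returns the count. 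You instead keep a single convex knapsack polytope and recover the count analytically, via the inclusion--exclusion spline $V(b)=\frac{1}{n!\prod_i a_i}\sum_{S}(-1)^{|S|}\bigl(b-\sum_{i\in S}a_i\bigr)_+^n$, whose $n$-th derivative jumps at integer breakpoints by the signed count, de-signed by forcing all weights odd and extracted by interpolating degree-$n$ pieces on unit intervals. This does work (the breakpoints are integers, so each open unit interval carries a single polynomial recoverable from $n+1$ exact rational evaluations with polynomial bit-complexity), but it costs polynomially many oracle calls and substantially more bookkeeping than the paper's one-call, disjunction-based reduction.

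Two things to tighten if you pursue your version. First, your source problem (count subsets with sum $\le b$) does not match your extraction: the leading coefficient on $(B,B+1)$ gives the \emph{alternating} cumulative sum $\sum_{k\le B}(-1)^kN_k$, not the unsigned one, so you must reduce from the exact-sum count $N_L$ instead; after the substitution $a_i\mapsto 2(n+1)a_i+1$ this means summing the $n+1$ jump extractions at $k=2(n+1)L+j$, $j=0,\dots,n$, which is fine but should be said. Second, the claim that signed and unsigned counts agree ``up to the global sign $(-1)^k$'' needs the one-line parity argument $\sum_{i\in S}a_i\equiv|S|\pmod 2$ for odd weights made explicit, since it is the only thing preventing cancellation from destroying the count.
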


\begin{proof}[Sketch of proof]
The proof is done by reducing a \#P-complete variant of the subset sum problem~\cite{garey2002computers} to an MI problem on an \smtlra~formula $\theory$ whose tree primal graph has diameter $\bigO(n)$.
In a nutshell, one can always construct in polynomial time a formula $\theory$ such that the graph $\graph_{\theory}$ is a chain with diameter $n$ and  computing $\MI(\theory)$ equals solving (up to a constant) the aforementioned subset sum problem variant, which is known to be \#P-hard~\cite{cormen2009introduction,cheng2013counting}.
A complete proof is in Appendix.
\end{proof}

Furthermore, when the primal graphs are balanced trees, i.e., they have diameters that scale logarithmically in the number of variables, increasing their treewidth from one to two is sufficient to turn MI problems from tractable to \#P-hard.
\begin{thm}
Computing $\MI(\theory)$ of an SMT($\lra$) formula $\theory$ whose primal graph $\graph_\theory$ has treewidth two and diameter of length $\bigO(\log(n))$ is \#P-hard, with $n$ being the number of variables.
\label{thm:two}
\end{thm}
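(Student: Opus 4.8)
The plan is to reduce the same \#P-complete variant of the subset sum problem used in Theorem~\ref{thm:one} to an $\MI$ instance, but to re-associate the sequential accumulation of partial sums into a \emph{balanced} binary tree, so that the primal graph $\graph_\theory$ acquires logarithmic diameter while its treewidth rises from one to exactly two. Given positive integers $a_1,\dots,a_n$ and a target $T$ (w.l.o.g.\ $n$ a power of two, padding with dummy items forced to contribute zero), I would build a complete binary tree whose $n$ leaves carry per-item choice gadgets and whose internal nodes carry partial-sum variables $S_v$. Each leaf variable $S_{\ell_i}$ is constrained to the disjunction $[0,\delta]\cup[a_i,a_i+\delta]$, so that the two thin boxes encode the binary choice $z_i\in\{0,1\}$ while contributing equal volume; the root variable $S_r$ is constrained to an interval just above $T$.

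The crucial ingredient is an addition gadget realising $S_v\approx S_u+S_w$ for the two children $u,w$ of $v$ \emph{as a full-dimensional volume} rather than a measure-zero equality (which would make $\MI$ vanish). I would use the thin slab $S_u+S_w\le S_v\le S_u+S_w+\epsilon$: integrating out $S_v$ contributes a constant factor $\epsilon$ and forwards the value $S_u+S_w$ up the tree with additive error at most $\epsilon$. Since this clause couples only the three variables $\{S_u,S_w,S_v\}$, it induces a single triangle in $\graph_\theory$; the union of these triangles, glued along the shared node variables, is a partial $2$-tree, and the bags $\{S_v,S_u,S_w\}$ arranged as the binary tree give a tree decomposition of width exactly two. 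Because the tree is complete and balanced, the distance between any two vertices is $\bigO(\log n)$, establishing both structural requirements.

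It remains to argue correctness of the count. Writing $S_r$ as $\sum_i a_i z_i$ plus the accumulated leaf offsets $\eta_i\in[0,\delta]$ and the slab slacks in $[0,\epsilon]$, the total perturbation is bounded by $M=n\delta+(n-1)\epsilon$. Choosing $\delta,\epsilon$ as small rationals with $M<1/2$ (e.g.\ $\delta=\epsilon=1/(4n)$) guarantees that the root interval $[T,T+M]$ is met by a choice $z$ \emph{iff} $\sum_i a_i z_i=T$, since distinct integer subset sums differ by at least one. Every satisfying choice then contributes exactly the same box volume, a fixed constant $V_0$ independent of $z$, and every other choice contributes zero, so $\MI(\theory)$ equals the number of solutions times $V_0$; a polynomial-time $\MI$ oracle would thus solve the \#P-hard counting problem, and all encoded rationals have polynomial size.

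I expect the main obstacle to be the design and analysis of this addition gadget, which must be simultaneously (i)~full-dimensional, so that $\MI$ does not collapse to zero; (ii)~supported on only three variables, so that the treewidth stays at two rather than growing; and (iii)~composable over $\bigO(\log n)$ levels with total slack small enough that the integer target still isolates the exact-sum solutions. The delicate point is verifying that the per-solution volume is \emph{identical} across all solutions, so that $\MI$ is exactly proportional to the count rather than merely monotone in it, and confirming that the triangles never glue into a clique of size four that would push the treewidth above two.
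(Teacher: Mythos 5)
Your proposal is correct and follows essentially the same route as the paper's proof: the same counting subset-sum reduction, the same two-thin-interval leaf choice gadgets, the same three-variable thin-slab addition gadget arranged as a balanced binary tree of triangles (the paper uses the symmetric slab $-\frac{1}{4n}+X_{j+1,2i-1}+X_{j+1,2i}<X_{j,i}<\frac{1}{4n}+X_{j+1,2i-1}+X_{j+1,2i}$ with widths $\frac{1}{2n}$ in place of your one-sided $\epsilon$-slab), the same error budget keeping the total slack below $\frac{1}{2}$ so the integer target isolates exact solutions, and the same equal-per-solution-volume argument giving $\MI$ proportional to the count. The delicate points you flag (identical volume $(\frac{1}{2n})^{2n-1}$ for every assignment, and triangles sharing only single vertices so the treewidth stays at two) are exactly what the paper's Claims verify.
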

\begin{proof}[Sketch of proof]
As before, we construct a poly-time reduction from the \#P-complete variant of the subset sum problem to an $\MI$ problem.
This time, the \smtlra~formula $\theory$ is built such that the graph $\graph_\theory$ has treewidth two with cliques (hence not a tree).
Meanwhile the primal graph has diameter to be at most $\log(n)$ by putting the cliques in a balanced way.
Then the $\MI$ over a subtree could potentially be a
subset sum over the integers that appear in the formulas associated with the subtree.
Then computing the $\MI$ of formula $\theory$ equals to solving the subset sum problem.
Complete proof is provided in Appendix A.2
\end{proof}

From Theorems~\ref{thm:one} and~\ref{thm:two} we can deduce that having a tree-shaped \textit{and} balanced primal graph is not only a sufficient structural requirement for tractable inference via MI, \textit{but also a necessary one}.
This sets the standard for the solver complexity:
every exact MI solver that aims to be efficient, need to operate in the aforementioned regime.

In next section we introduce a novel and efficient exact MI solver that indeed 
achieves the optimal complexity in terms of being quasi-polynomial
on MI with balanced tree-shaped primal graphs.
It computes MI by exchanging messages among the nodes of the primal graph of an SMT($\lra$) formula.
As the reader might intuit at this point, devising a message passing inference scheme for MI will be inherently more challenging than for discrete domains.

\begin{figure}[!tp]
\centering
\begin{tikzpicture}[grow=right]

\varnode[]{x1}{$X_1$};
\varnode[right=\normdist of x1]{x2}{$X_2$};
\varnode[right=\normdist of x2]{x3}{$X_3$};

\edge[->, draw=\upcolor, line width=2pt] {x1}{x2};
\edge[->, draw=\upcolor, line width=2pt] {x2}{x3};

\msgnode[above=\updist of $ (x1) !.5! (x2) $, auto, draw=\upcolor]{m1}{$([0, 2], X_{2})\quad $\\ $([2,3], 2)\quad $}{1}{2}{m11}{28pt};

\msgnode[above=\updist of $ (x2) !.5! (x3) $, auto, draw=\upcolor]{m2}{$([1, 2], 1/2(X_{3}^2)\quad $}{2}{3}{m21}{40pt};

\path[-,draw=\upcolor,line width=\midlinewidth] (m1) -- ($ (x1) !.5! (x2) $);
\path[-,draw=\upcolor,line width=\midlinewidth] (m2) -- ($ (x2) !.5! (x3) $);

\end{tikzpicture}\hspace{20pt}\begin{tikzpicture}[grow=right]

\varnode[]{x1}{$X_1$};
\varnode[right=\normdist of x1]{x2}{$X_2$};
\varnode[right=\normdist of x2]{x3}{$X_3$};

\edge[<-, draw=\downcolor, line width=2pt] {x1}{x2};
\edge[<-, draw=\downcolor, line width=2pt] {x2}{x3};

\msgnode[above=\updist of $ (x1) !.5! (x2) $, auto, draw=\downcolor]{m1}{$([1, 2], X_{1})\quad $\\ $([0,1], X_{1}^{2})\quad $}{2}{1}{m11}{30pt};

\msgnode[above=\updist of $ (x2) !.5! (x3) $, auto, draw=\downcolor]{m2}{$([0, 1], 1)\quad$\\ $([1, 2], \scalebox{0.75}[1.0]{-} X_{2}+2)\quad $}{3}{2}{m21}{44pt};

\path[-,draw=\downcolor,line width=\midlinewidth] (m1) -- ($ (x1) !.5! (x2) $);
\path[-,draw=\downcolor,line width=\midlinewidth] (m2) -- ($ (x2) !.5! (x3) $);

\end{tikzpicture}
\caption{\textbf{An example of a run of MP-MI}. Upward (blue) and downward (pink) messages are shown as piecewise polynomials in boxes for a tree primal graph rooted at $X_{3}$. The final belief for $X_{2}$ can then be computed as the piecewise polynomial $\bel_{2}=\msg{1}{2}{}\cdot\msg{3}{2}{}=\{([0, 1], X_{2}), ([1,2], - X_{2}^{2}+2X_{2})\}$.
}\label{fig:mp-mi}
\end{figure}
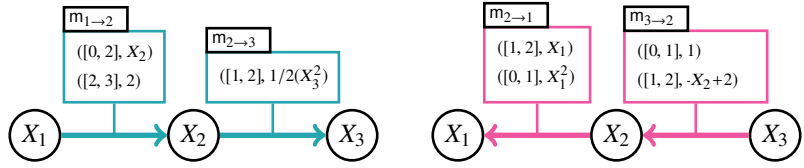

\section{MP-MI: MI inference via message passing}
\label{sec:message-passing}
Deriving an equivalent message passing scheme for MI to what belief
propagation is for the discrete case~\cite{pearl2014probabilistic} poses unique and considerable challenges.
First, by allowing complex logical constraints such as those creating disjoint worlds, one might have to integrate over exponentially many feasible regions, i.e.,
polytopes~\cite{morettin2017efficient}.
This is computationally expensive even though numerical integration is a consolidated field.
Additionally, different from discrete domains, in real or hybrid
domains one generally does not have universal and compact
representations for distributions~\cite{koller2009probabilistic}.
And when these are available, e.g. in the case of Gaussians, the corresponding density models 
might have restricted expressiveness and not allow for efficient integration over arbitrary constraints.
In fact, exact integration is limited to exponentiated polynomials of bounded degree (usually, two).

\paragraph{General propagation scheme.}
\label{sec: Beleif Propagation Scheme}
Let $\theory$ be an \smtlra~formula and $\graph_\theory$ its tree primal graph, 
rooted at node $r$ corresponding to variable $X_{r}\in\X$. 
This can always be done by choosing an arbitrary node $r$ as root and then orienting all edges away from node $r$. 
Also let $\Va$ be the set of indexes of variables $\X$ in formula $\theory$ and let $\E$ be the set of edges $i-j$ in graph $\graph_\theory$ connecting variables $X_{i}$ and $X_{j}$.
Then the formula can be rewritten as
$\theory = \bigwedge\nolimits_{i \in \Va} \theory_{i} \land \bigwedge\nolimits_{(i, j) \in \E} \theory_{i, j}$,
with $\theory_{i}$ being formulas involving only variable $X_{i}$ and, analogously, formula $\theory_{i, j}$ involving only variables $X_{i}$ and $X_{j}$.

Our message passing scheme, which we name MP-MI, comprises exchanging \textit{messages} between nodes in $\graph_\theory$.
Messages are then used to compute \textit{beliefs}, which represent the unnormalized marginals of nodes,
a nice property shared with its discrete message passing counterpart.
MP-MI operates in two phases: an \textit{upward pass} and a \textit{downward} one.
First we send messages up from the leaves to the root ({upward
  pass}) such that each node has all information from its children and
then we incorporate messages from the root down to the leaves ({downward pass}) such that each node also has information from its parent node.

When the message passing process finishes, each node in graph
$\graph_{\theory}$ is able to compute its belief by aggregating the messages received from all its neighbors.
As the beliefs obtained by this process are unnormalized marginals of nodes,
their integration is equivalent to computing $\MI(\theory)$.

\begin{pro}
\label{prop: root marginal}
Let $\theory$ be an SMT($\lra$) formula with tree primal graph, then the belief ~$\bel_{i}$~ of node $i$ obtained from scheme MP-MI is the unnormalized marginal $p_{i}(x_{i})$ of variable $X_i \in \X$. 
Moreover, the MI of formula $\theory$ can be obtained by $\MI(\theory)=\int_{\mathbb{R}}\id{x_{i}\models\theory_{i}} \cdot \bel_{i}(x_i) ~d x_{i}$.
\end{pro}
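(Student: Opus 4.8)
The plan is to establish the stronger pointwise identity $\id{x_i \models \theory_i}\cdot\bel_i(x_i) = f_i(x_i)$, where $f_i(x_i)=\int_{\mathbb{R}^{|\X|-1}}\id{\x\models\theory}\,d\x\setminus\{x_i\}$ is the unnormalized marginal of Equation~\ref{eq:wmi-marg} (the quantity the proposition calls $p_i$); integrating both sides over $x_i$ then yields $\MI(\theory)=\int_{\mathbb{R}}\id{x_i\models\theory_i}\,\bel_i(x_i)\,dx_i$, giving both claims at once. The starting point is the tree factorization of the integrand: from $\theory=\bigwedge_{i\in\Va}\theory_i\wedge\bigwedge_{(i,j)\in\E}\theory_{i,j}$ we get $\id{\x\models\theory}=\prod_{i\in\Va}\id{x_i\models\theory_i}\cdot\prod_{(i,j)\in\E}\id{(x_i,x_j)\models\theory_{i,j}}$. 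Rooting $\graph_\theory$ at $r$ induces for each node $i$ a subtree $T_i$ of its descendants, and the structural fact I will exploit is that deleting the edge to the parent $\pi(i)$ splits the variables into $\x_{T_i}$ and its complement, with $\theory_{i,\pi(i)}$ the only clause straddling the cut. Since the full integrand is an indicator (hence nonnegative), Tonelli's theorem licenses reordering every integral freely, and closure of piecewise polynomials under product and definite integration — noted earlier in the text — keeps all messages and beliefs in the piecewise-polynomial family, so the integrals are well defined.

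For the upward pass I would define, writing $\ch(i)$ for the children of $i$, $\msg{i}{j}{}(x_j)=\int_{\mathbb{R}}\id{x_i\models\theory_i}\,\id{(x_i,x_j)\models\theory_{i,j}}\prod_{c\in\ch(i)}\msg{c}{i}{}(x_i)\,dx_i$, and prove by induction on the height of $T_i$ that $\msg{i}{j}{}(x_j)$ equals the integral over $\x_{T_i}$ of $\id{(x_i,x_j)\models\theory_{i,j}}$ times the product of every unary clause $\theory_k$ and every pairwise clause $\theory_{k,l}$ internal to $T_i$. The base case ($i$ a leaf) is immediate. For the inductive step, the subtrees $\{T_c\}_{c\in\ch(i)}$ are pairwise variable-disjoint and touch each other only through the shared parent variable $x_i$, so by the inductive hypothesis and Tonelli the product $\prod_{c}\msg{c}{i}{}(x_i)$ collapses into a single integral over $\bigcup_c\x_{T_c}$ of the product of all clauses internal to those subtrees; multiplying by $\id{x_i\models\theory_i}\,\id{(x_i,x_j)\models\theory_{i,j}}$ and integrating out $x_i$ closes the induction.

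Symmetrically, for the downward pass I set $\msg{j}{i}{}(x_i)=\int_{\mathbb{R}}\id{x_j\models\theory_j}\,\id{(x_i,x_j)\models\theory_{i,j}}\prod_{k\in\neigh(j)\setminus\{i\}}\msg{k}{j}{}(x_j)\,dx_j$, where the factors for children of $j$ are upward messages and the factor for $\pi(j)$ is the downward message already computed; the root, having no parent, provides the base case through the upward messages from its other children. Induction from the root then shows $\msg{j}{i}{}(x_i)$ equals the integral over all variables outside $T_i$ of every unary and pairwise clause not internal to $T_i$, with $\theory_{i,j}$ attached, again because the sibling subtrees and the region above $j$ are variable-disjoint apart from $x_j$. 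Combining, the belief $\bel_i(x_i)=\prod_{k\in\neigh(i)}\msg{k}{i}{}(x_i)$ multiplies the child subtree integrals (covering all clauses inside the $T_c$ together with the edges $\theory_{c,i}$) against the parent complement integral (covering every remaining clause together with $\theory_{i,\pi(i)}$); each pairwise clause and each unary clause \emph{except} $\theory_i$ appears in exactly one factor, and the integrated-out variables are precisely $\X\setminus\{X_i\}$. Multiplying by the single missing factor $\id{x_i\models\theory_i}$ and invoking the factorization of the first paragraph gives $\id{x_i\models\theory_i}\,\bel_i(x_i)=f_i(x_i)$, and integrating over $x_i$ yields the $\MI$ identity.

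The main obstacle is the inductive factorization used in both passes: justifying that the integral of a product over a union of subtrees splits into the product of the per-subtree integrals. This is exactly where acyclicity is indispensable, since distinct child subtrees share no variable and the coupling clauses live only on tree edges; it is also the property that fails once $\graph_\theory$ has cycles, which is what forces the hardness regime of the previous section. Making the step rigorous requires tracking the support of the piecewise-polynomial integrands and an appeal to Tonelli, which is legitimate because the overall integrand $\id{\x\models\theory}$ is nonnegative even though, as the example in Figure~\ref{fig:mp-mi} shows, individual polynomial pieces of a message (e.g.\ $-X_2^2+2X_2$) need not be.
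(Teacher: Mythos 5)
Your proposal is correct and follows essentially the same route as the paper's proof: unfold the message recursion over the tree, use the disjointness of the child subtrees (plus Fubini/Tonelli) to merge the product of integrals into one integral, and invoke the factorization $\theory=\bigwedge_i\theory_i\wedge\bigwedge_{(i,j)\in\E}\theory_{i,j}$ to recognize the result as the unnormalized marginal. Your version merely makes the paper's "by doing this recursively" explicit as separate upward/downward inductions, and is in fact more careful than the paper's own final equality in tracking that the unary clause $\theory_i$ is the one factor missing from $\bel_i$ and must be reattached before integrating.
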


Now we will describe more explicitly how our beliefs are computed to achieve the nice properties mentioned in the above proposition.
\paragraph{Beliefs.} %
Let $\ch(i)$ be the set of children nodes for node $i$.
We define the belief in the upward pass at node $i$ by $\bel_i^{\up}$ and %
its downward belief $\bel_i^{\down}$ as the final belief $\bel_i$, as follows.
\begin{equation}
\label{eq: definition of belief}
    \small{ \bel_i^{\up}(x_i) = \prod_{c \in \ch(i)} \msg{c}{i}{}(x_i), \quad \bel_i(x_i) = \bel_i^{\down}(x_i) = \prod_{c \in \neigh(i)} \msg{c}{i}{}(x_i) }
\end{equation}
where $\msg{c}{i}{}$ denotes the message sent from a node $c$ to its neighbor node $i$.
We define more formally how to compute each message, next.

\paragraph{Messages.} The message sent from a node $i$ (corresponding to variable $X_{i}\in\X$) in primal graph $\graph_\theory$ to one of its neighbor node $j\in\neigh({i})$ is computed recursively as follows,
\begin{equation}
\msg{i}{j}{}(x_j)    
= \int_{\mathbb{R}} \id{x_{i}, x_{j}\models\theory_{i,j}}\id{x_i\models\theory_{i}} \times \prod\nolimits_{c\in\neigh({i})\setminus\{j\}} 
\msg{c}{i}{}(x_i) ~d x_{i}
\label{eq:msg}
\end{equation}
Notice that even though the integration is symbolically defined over the whole real domain, the \smtlra~ logical constraints in
formulas $\theory_{i, j}$ and $\theory_i$ would give integration bounds that are linear in the variables.
This guarantees that our messages will be univariate piecewise polynomials.

\begin{pro}
Let $\theory$ be an SMT($\lra$) formula with tree primal graph, then
the messages 
as defined
in Equation~\ref{eq:msg} 
and beliefs as defined in 
Equation~\ref{eq: definition of belief}
are univariate piecewise polynomials. 
\label{pro:pp}
\end{pro}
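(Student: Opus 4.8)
The plan is to prove the statement for the messages by structural induction over the rooted tree primal graph $\graph_{\theory}$, propagating from the leaves toward the root, and then deduce the statement for the beliefs for free, since by Equation~\ref{eq: definition of belief} each belief is a finite product of messages. Two closure properties of univariate piecewise polynomials will carry the whole argument: first, that they are closed under finite products (passing to the common refinement of the two partitions of $\mathbb{R}$ makes each factor a polynomial on every piece, so their product is polynomial there); and second, that integrating a univariate piecewise polynomial in $x_i$ over an interval whose endpoints are linear functions of $x_j$ returns a univariate piecewise polynomial in $x_j$. I would isolate these two facts as small lemmas and reuse them at every node.

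For the base case I would take a leaf $i$ with unique neighbour $j$: the product over $\neigh(i)\setminus\{j\}$ in Equation~\ref{eq:msg} is empty, so the message reduces to $\int_{\mathbb{R}}\id{x_i,x_j\models\theory_{i,j}}\,\id{x_i\models\theory_i}\,dx_i$. Since $\theory_i$ and $\theory_{i,j}$ are Boolean combinations of $\lra$ atoms, for each fixed $x_j$ the feasible values of $x_i$ form a finite union of intervals whose endpoints are constants (from $\theory_i$) or linear in $x_j$ (from $\theory_{i,j}$); integrating the constant $1$ over them is piecewise linear in $x_j$. For the inductive step I would assume every incoming message $\msg{c}{i}{}$ is piecewise polynomial in $x_i$, invoke the product-closure fact to conclude their product is piecewise polynomial in $x_i$, observe that the indicators restrict the $x_i$-integration to intervals with endpoints constant or linear in $x_j$, and then apply the integration-closure fact. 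Finiteness of the tree guarantees that the induction terminates at the root.

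The hard part will be the second closure fact, namely that $\int_{l(x_j)}^{u(x_j)} g(x_i)\,dx_i$ is piecewise polynomial in $x_j$ whenever $g$ is piecewise polynomial in $x_i$ and $l,u$ are linear in $x_j$. The care is entirely in the bookkeeping of breakpoints: I would argue that the values of $x_j$ at which a moving limit $l(x_j)$ or $u(x_j)$ coincides with a fixed breakpoint of $g$, together with the values at which feasible intervals appear or vanish, are finite in number and cut the $x_j$-axis into finitely many sub-intervals. On each sub-interval the set of active pieces of $g$ is fixed, so the integral is a fixed polynomial antiderivative of $g$ evaluated at bounds linear in $x_j$, i.e. a polynomial in $x_j$; gluing these pieces together yields the claim. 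This is precisely the symbolic integration step underlying the messages, so the same analysis simultaneously exposes how the polynomial degree of the messages grows along the tree.
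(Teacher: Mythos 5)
Your proposal is correct and follows essentially the same route as the paper's proof: induction over the tree starting from the leaves, with the base case handled by observing that the indicators yield integration bounds linear in the neighbouring variable, and the inductive step carried by closure of univariate piecewise polynomials under multiplication and under integration with linear symbolic bounds. The only difference is one of detail — you spell out the breakpoint bookkeeping behind the integration-closure lemma, which the paper simply asserts as "closed under multiplication and integration."
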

\textbf{Remark.}
The multiplication of two piecewise polynomial functions $f_1(x)$ and $f_2(x)$ is defined as a piecewise polynomial function $f(x)$ whose domain is the intersection of the domains of these two functions and for each $x$ in its domain, the value is defined as $f(x) := f_1(x) \cdot f_2(x)$.

\begin{algorithm*}[tp]
\caption{\textbf{MP-MI}($\theory$) -- Message Passing Model Integration}
\label{alg: message passing}
\begin{algorithmic}[1]
\State $\V_{\mathsf{up}} \leftarrow$ sort nodes in $\graph_{\theory}$, children before parents
\IFor {\textbf{each} $X_{i}\in\V_{\mathsf{up}}$} \textsf{send-message}($X_i$, $X_{\mathsf{parent}(X_{i})}$) \EndIFor
\Comment{upward pass}
\State $\V_{\mathsf{down}} \leftarrow$ sort nodes in $\graph_{\theory}$, parents before children
\For {\textbf{each} $X_{i}\in\V_{\mathsf{down}}$}
\Comment{downward pass}
\IFor{\textbf{each} $X_{c}\in\ch(X_{i})$} \textsf{send-message}($X_i$, $X_{c}$) \EndIFor
\EndFor
\State \textbf{Return} $\{\bel_{i}\}_{i:X_{i}\in\graph_{\theory}}$
\end{algorithmic}
{\textbf{\textsf{send-message}}($X_i$, $X_j$)}\\
\vspace{-0.15in}
\begin{algorithmic}[1]
\State $\bel_i \leftarrow$ \textsf{compute-beliefs} 
\Comment{cf.\  Equation~\ref{eq: definition of belief}}
\State $P \leftarrow$ \textsf{critical-points}($\bel_i, \theory_i, \theory_{i,j}$),
$\quad\quad I \leftarrow$ \textsf{intervals-from-points}($P$)
\Comment{cf.\ SMI in~\cite{zeng2019efficient}}
\For{interval $[l, u] \in I$ consistent with formula $\theory$}
\State $\langle l_{s}, u_{s}, f \rangle \leftarrow$ \textsf{symbolic-bounds}($\bel_i, [l, u], \theory_{i, j}$)
\State $f^\prime \leftarrow \int_{l_{s}}^{u_{s}} f(x_i) ~d x_i$,
$\quad\quad\msg{i}{j}{} \leftarrow \msg{i}{j}{} \cup \langle l, u, f^\prime \rangle$
\EndFor
\State \textbf{Return} $\msg{i}{j}{}$
\end{algorithmic}
\label{alg: message passing model integration}
\end{algorithm*}

In figure~\ref{fig:mp-mi} we show an example of the two passes in MP-MI and we summarize the whole MP-MI scheme in Algorithm~\ref{alg: message passing model integration}. 
There, two functions \textit{critical-points} and
\textit{symbolic-bounds} are subroutines used to compute the numeric and symbolic bounds of
integration for our pieces of univariate polynomials.
Both of them can be efficiently implemented, see~\citep{zeng2019efficient} for details.
Concerning the actual integration of the polynomial pieces, this can
be done efficiently symbolically, a task supported by many scientific
computing packages.
Next we will show how the beliefs and messages obtained from MP-MI can be leveraged for inference tasks.

\paragraph{Amortizing Queries.}

Given a \smtlra~formula $\theory$, 
in the next Propositions, we show that we can leverage beliefs and messages computed by MP-MI to
speed up (amortize) inference
time over multiple queries on formula $\theory$.
More specifically, when given queries that conform to the structure of formula $\theory$, i.e. queries on a node variable or queries over variables that are connected by an edge in graph $\graph_{\theory}$,
we can reuse the local information encoded in beliefs. 

From \textit{''MI is all you need''} perspective, we can compute the probability of a logical query as a ratio of two MI computations.
Expectations and moments can also be computed efficiently by leveraging beliefs and taking ratios.
They are pivotal in several scenarios including inference and learning.
\begin{pro}
Let $\theory$ be an SMT($\lra$) formula with a tree primal graph, and let $\Phi$ be an SMT($\lra$) query over variable $X_{i}\in\X$. 
It holds that $\MI(\theory\wedge\Phi)=\int_{\mathbb{R}}\id{x_{i}\models\Phi}\id{x_{i}\models\theory_{i}}\bel_{i}(x_i) d x_{i}$.
\label{prop:mi-msg-any}
\end{pro}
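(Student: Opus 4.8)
The plan is to reduce the claim to the correctness identity for $\MI(\theory)$ already established in Proposition~\ref{prop: root marginal}, exploiting that the query $\Phi$ constrains only the single variable $X_i$. First I would unfold $\MI(\theory\wedge\Phi)$ directly from its defining integral in Equation~\ref{eq:mi-vol-I}. Every model of $\theory\wedge\Phi$ is a model of $\theory$ that additionally satisfies $\Phi$, so the indicator splits as $\id{\x\models\theory\wedge\Phi}=\id{\x\models\theory}\,\id{x_i\models\Phi}$, where I use that $\Phi$ mentions $X_i$ only and hence $\id{\x\models\Phi}$ depends on $x_i$ alone (in particular $\theory\wedge\Phi$ adds no new variable and its primal graph is still the tree $\graph_\theory$).

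Next I would integrate by isolating the $x_i$ axis. Since $\id{x_i\models\Phi}$ is constant in every variable but $X_i$, it factors out of the inner integral over $\x\setminus\{x_i\}$, giving
$$\MI(\theory\wedge\Phi)=\int_{\mathbb{R}}\id{x_i\models\Phi}\left(\int_{\mathbb{R}^{|\X|-1}}\id{\x\models\theory}~d\x\setminus\{x_i\}\right)dx_i=\int_{\mathbb{R}}\id{x_i\models\Phi}\,f_i(x_i)\,dx_i,$$
where I recognize the inner integral as the unnormalized marginal $f_i(x_i)$ of Equation~\ref{eq:wmi-marg}.

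The final step is to substitute the message-passing characterization of that marginal. By the correctness of MP-MI (Proposition~\ref{prop: root marginal}), the unnormalized marginal factorizes as $f_i(x_i)=\id{x_i\models\theory_i}\,\bel_i(x_i)$; this is precisely the integrand appearing in the MI identity $\MI(\theory)=\int_{\mathbb{R}}\id{x_i\models\theory_i}\,\bel_i(x_i)\,dx_i$, with $\bel_i(x_i)=\prod_{c\in\neigh(i)}\msg{c}{i}{}(x_i)$ accounting for every edge and every non-local univariate factor in the subtrees around $i$, while the local factor $\theory_i$ at $i$ itself is supplied separately. Plugging this in yields exactly $\MI(\theory\wedge\Phi)=\int_{\mathbb{R}}\id{x_i\models\Phi}\,\id{x_i\models\theory_i}\,\bel_i(x_i)\,dx_i$, as claimed.

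The single point needing care, and the main obstacle, is justifying that the belief $\bel_i$ used here is genuinely the object MP-MI computes on $\theory$, i.e. that adjoining $\Phi$ changes no message. This holds because $\Phi$ involves only $X_i$: it leaves every pairwise factor $\theory_{j,k}$ and every univariate factor $\theory_j$ with $j\neq i$ untouched, so none of the messages $\msg{c}{i}{}$ flowing toward $i$ are affected, and $\Phi$ is absorbed entirely into the root's local factor. Equivalently, one may apply Proposition~\ref{prop: root marginal} to $\theory\wedge\Phi$ rooted at $i$, whose primal graph coincides with $\graph_\theory$ and whose root factor is $\theory_i\wedge\Phi$, while its belief is inherited unchanged from $\theory$.
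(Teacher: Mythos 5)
Your proposal is correct and follows essentially the same route as the paper's proof: unfold the defining integral of $\MI(\theory\wedge\Phi)$, factor the indicator as $\id{\x\models\theory}\,\id{x_i\models\Phi}$ since $\Phi$ mentions only $X_i$, pull $\id{x_i\models\Phi}$ out of the inner integral, and identify that inner integral with the belief $\bel_i$ via the root-marginal correctness of MP-MI. Your closing remark that adjoining $\Phi$ leaves all messages toward $i$ unchanged is a welcome extra justification that the paper leaves implicit, but it does not change the argument.
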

\begin{pro}
Let $\theory$ be an \smtlra~formula
and let $\Phi$ be an SMT($\lra$) query over $X_{i}, X_{j}\in\X$ that are connected in tree primal graph $\graph_{\theory}$. 
The updated message from node $j$ to node $i$ is as follows.
\begin{equation*}
\msg{j}{i}{*}(x_i) = \int_{\mathbb{R}} \bel_j(x_j) / \msg{i}{j}{}(x_j) \times
\id{x_{i}, x_{j}\models \theory_{i,j}\land\Phi }\id{x_j\models\theory_{j}} ~d x_{j}
\end{equation*}
It holds that $\MI(\theory\wedge\Phi)=\int_{\mathbb{R}}\id{x_{i}\models\theory_{i}}\cdot\bel^*_{i}(x_i) d x_{i}$ with $\bel^*_{i}$ obtained from the updated message $\msg{j}{i}{*}$.
\label{prop:mi-msg bivariate literals}
\end{pro}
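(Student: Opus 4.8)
The plan is to exploit the locality of the update: adding the bivariate query $\Phi$ over the adjacent variables $X_i,X_j$ changes the factorization of the formula only in the single edge factor associated with $(i,j)$. Writing $\theory\wedge\Phi=\bigwedge_{k\in\Va}\theory_k\wedge\bigwedge_{(k,l)\in\E,\,(k,l)\neq(i,j)}\theory_{k,l}\wedge(\theory_{i,j}\wedge\Phi)$, I would first observe that $\Phi$ mentions only $X_i$ and $X_j$, so it merges into $\theory_{i,j}$ and the primal graph of $\theory\wedge\Phi$ is still the tree $\graph_\theory$. Hence Proposition~\ref{prop: root marginal} applies verbatim to $\theory\wedge\Phi$, and it suffices to show that the belief it assigns to node $i$ equals the $\bel^*_i$ built from the updated message.

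Second, I would argue by induction on the message recursion of Equation~\ref{eq:msg} that every message is unchanged except the one along the modified edge. Deleting the edge $(i,j)$ splits the tree into the component containing $i$ and the one containing $j$; the recursive computation of any message $\msg{c}{i}{}$ with $c\in\neigh(i)\setminus\{j\}$ references only univariate and edge factors lying strictly inside $i$'s component, and symmetrically every $\msg{c}{j}{}$ with $c\in\neigh(j)\setminus\{i\}$ references only factors inside $j$'s component. Since $\Phi$ touches none of these factors, all such messages coincide under $\theory$ and $\theory\wedge\Phi$. The only message that crosses the modified edge, and hence the only one that must be recomputed to form $\bel_i$, is $\msg{j}{i}{}$.

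Third, I would recompute this message directly from Equation~\ref{eq:msg} with $\theory_{i,j}$ replaced by $\theory_{i,j}\wedge\Phi$, obtaining $\msg{j}{i}{*}(x_i)=\int_\mathbb{R}\id{x_i,x_j\models\theory_{i,j}\wedge\Phi}\id{x_j\models\theory_j}\prod_{c\in\neigh(j)\setminus\{i\}}\msg{c}{j}{}(x_j)\,dx_j$. The remaining step is to recognize the product over $\neigh(j)\setminus\{i\}$ as the cavity belief at $j$: by the belief definition in Equation~\ref{eq: definition of belief}, $\bel_j=\prod_{c\in\neigh(j)}\msg{c}{j}{}$, so this product equals $\bel_j/\msg{i}{j}{}$, yielding exactly the stated formula for $\msg{j}{i}{*}$. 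Assembling $\bel^*_i(x_i)=\msg{j}{i}{*}(x_i)\prod_{c\in\neigh(i)\setminus\{j\}}\msg{c}{i}{}(x_i)$ gives precisely the product of all incoming messages of node $i$ under $\theory\wedge\Phi$, so Proposition~\ref{prop: root marginal} delivers $\MI(\theory\wedge\Phi)=\int_\mathbb{R}\id{x_i\models\theory_i}\bel^*_i(x_i)\,dx_i$.

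The main obstacle I expect is making the ratio $\bel_j/\msg{i}{j}{}$ rigorous, since piecewise polynomials do not admit genuine division: the equality holds only as a cancellation, because $\msg{i}{j}{}$ appears as an explicit factor of $\bel_j$. I would therefore phrase the argument so that the ratio is always interpreted as the already-available product $\prod_{c\in\neigh(j)\setminus\{i\}}\msg{c}{j}{}$ (the upward belief $\bel_j^{\up}$ of $j$ when $i$ is its parent), never performing literal polynomial division, and I would note that this cavity product is exactly what MP-MI computes and stores, which is also what makes the amortized update efficient.
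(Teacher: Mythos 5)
Your proposal is correct and follows essentially the same route as the paper's proof: both exploit that $\Phi$ only modifies the single edge factor $\theory_{i,j}$, so all messages except $\msg{j}{i}{}$ are unchanged, the cavity product $\prod_{c\in\neigh(j)\setminus\{i\}}\msg{c}{j}{}$ can be read off as $\bel_j/\msg{i}{j}{}$, and the conclusion follows from Proposition~\ref{prop: root marginal} applied to $\theory\wedge\Phi$. Your explicit remark that the ``division'' is really a cancellation of a stored factor is a welcome clarification that the paper leaves implicit.
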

\begin{pro}
Let $\theory$ be an SMT($\lra$) formula with tree primal graph,
then the $k$-th moment of variable $X_{i}\in\X$ can be obtained 
by
$\mathbb{E}[X_i^k] = \frac{1}{\MI(\theory)} \int_{\mathbb{R}}\id{x_{i}\models\theory_{i}} \times x_i^k \bel_{i}(x_i) ~d x_{i}$.
\label{prop: expectation moments}
\end{pro}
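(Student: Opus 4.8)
The plan is to derive the identity by a direct substitution that expresses the $k$-th moment in terms of the univariate marginal density of $X_i$ and then replaces that marginal by the belief $\bel_i$ via Proposition~\ref{prop: root marginal}. First I would observe that, under the normalized distribution $p_\theory$ induced by the MI problem, the $k$-th moment of $X_i$ depends only on the marginal of $X_i$: since $x_i^k$ is a function of $x_i$ alone, Fubini's theorem allows me to marginalize out all other variables and write $\mathbb{E}[X_i^k] = \int_{\mathbb{R}} x_i^k\, p_\theory(x_i)\, dx_i$, where $p_\theory(x_i)$ is precisely the univariate marginal density of Equation~\ref{eq:wmi-marg}.

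Next I would substitute the marginal. By Equation~\ref{eq:wmi-marg} we have $p_\theory(x_i) = \frac{1}{\MI(\theory)}\, f_i(x_i)$, with $f_i(x_i) = \int_{\mathbb{R}^{|\X|-1}}\id{\x\models\theory}\, dx\setminus\{x_i\}$ the \emph{unnormalized} marginal. The crux of the proof is to identify $f_i$ with the belief computed by MP-MI. This is exactly what Proposition~\ref{prop: root marginal} provides: the belief $\bel_i$ is the unnormalized marginal once the single-variable constraint $\theory_i$ is accounted for. Concretely, each incoming message to node $i$ already folds in the pairwise constraints $\theory_{i,j}$ and the neighbors' self-constraints but \emph{not} $\theory_i$ itself (see Equation~\ref{eq:msg}), so the full unnormalized marginal is $f_i(x_i) = \id{x_i\models\theory_i}\cdot\bel_i(x_i)$ — the same factor $\id{x_i\models\theory_i}$ that already appears in the MI identity of Proposition~\ref{prop: root marginal}. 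Plugging this into the moment integral yields $\mathbb{E}[X_i^k] = \frac{1}{\MI(\theory)}\int_{\mathbb{R}}\id{x_i\models\theory_i}\, x_i^k\, \bel_i(x_i)\, dx_i$, which is the claimed identity.

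The last thing I would check is well-definedness: by Proposition~\ref{pro:pp} the belief $\bel_i$ is a univariate piecewise polynomial, and the $\lra$ constraints $\theory_i$ restrict $x_i$ to a bounded union of intervals, so on each piece $x_i^k\,\bel_i(x_i)$ is a polynomial integrated over a bounded interval; hence the moment exists, is finite, and can be computed symbolically piece by piece. The only nontrivial point—and the one I expect to be the main obstacle—is the bookkeeping of the self-constraint $\theory_i$: the belief as a product of incoming messages does not by itself encode the constraint on $X_i$, so one must be careful to include $\id{x_i\models\theory_i}$ to recover the genuine unnormalized marginal. This is precisely the subtlety already resolved in Proposition~\ref{prop: root marginal}, so once that result is invoked the remaining argument is a routine substitution and an appeal to the linearity of integration.
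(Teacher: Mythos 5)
Your proposal is correct and follows essentially the same route as the paper: write the $k$-th moment as an integral against the marginal $p_\theory(x_i)$ of Equation~\ref{eq:wmi-marg} and then substitute the belief from Proposition~\ref{prop: root marginal}, keeping the factor $\id{x_i\models\theory_i}$ because the belief omits the self-constraint. The paper's proof is just this substitution in one line; your added remarks on Fubini, the bookkeeping of $\theory_i$, and well-definedness via Proposition~\ref{pro:pp} only make explicit what the paper leaves implicit.
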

Pre-computing beliefs and messages can dramatically speed up inference by amortization, as we will show in our experiments.
This is especially important when the primal graphs have large diameter. 
In fact, recall from section~\ref{sec:new} that even when the formula $\theory$ has a tree-shaped primal graph, but unbounded diameter, computing MI is still hard.

\paragraph{Complexity of MP-MI.}
As we mention in our analysis on the inherent hardness of MI problems in Section~\ref{sec:new},
our proposed MP-MI scheme runs efficiently on MI problems with tree-shaped and balanced tree primal graphs.%
Here we derive the algorithmic complexity of MP-MI explicitly. 
To do so, we leverage the concept of a pseudo tree.
The pseudo tree is a directed tree with the shortest diameter among all the spanning trees of an undirected primal graph.
In MP-MI this is equivalent to select a root $r$ in the primal graph such that it is the root of the pseudo tree and its child-parent relationships guide the execution of the upward and downward passes.

\begin{thm}
\label{thm: mpmi complexity}
  Consider an \smtlra~formula $\theory$ with a tree primal graph with height $h_p$,
  and a pseudo tree with $l$ leaves and height $h_t$.
  Let $m$ be the number of $\lra$ literals in formula~$\theory$,
  and~$n$ be the number of real variables.   
  Then the MI problem can be computed in
  $O(l\cdot (n^3 \cdot m^{h_p})^{h_t})$ by the MP-MI algorithm.
\end{thm}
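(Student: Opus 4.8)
The plan is to bound the total running time by multiplying the number of messages (and beliefs) computed across both passes by the worst-case cost of computing a single message via the \textsf{send-message} subroutine, then analyzing how the \emph{size} of a message --- measured as the number of polynomial pieces --- grows as we ascend the tree. Since MP-MI computes one message per directed edge in each of the upward and downward passes, the number of message computations is $O(n)$, so the leading $l$ factor must come from counting the pieces that accumulate at each node. First I would argue that each \textsf{send-message} call performs the operations in Equation~\ref{eq:msg}: multiplying the incoming messages from $\neigh(i)\setminus\{j\}$ to form a belief, then integrating piecewise against the literals in $\theory_i$ and $\theory_{i,j}$. The per-operation symbolic integration of a univariate polynomial and the critical-point/interval bookkeeping (cf.\ \textsf{critical-points} and \textsf{symbolic-bounds}) contribute the $n^3$ factor, which I would attribute to the cost of symbolic manipulation of polynomials of degree up to $O(n)$ in $n$ variables, following the analysis of SMI in~\cite{zeng2019efficient}.

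The crux of the argument --- and the step I expect to be the main obstacle --- is controlling the \textbf{multiplicative blow-up of the number of pieces}. When a node $i$ at depth $d$ in the primal tree aggregates messages from its children, the number of intervals in the resulting belief is the product (or at least a bound involving the product) of the numbers of intervals in the child messages, and each new literal in $\theory_{i,j}$ can further subdivide every existing piece. I would formalize that a single message introduces at most $m$ new breakpoints, so that a message crossing one level of the tree multiplies the piece count by a factor of $O(m^{h_p})$, where $h_p$ is the height of the \emph{primal} tree bounding how many literals can stack along a path. Iterating this across the $h_t$ levels of the \emph{pseudo} tree (whose height governs the depth of recursive message composition) yields $(m^{h_p})^{h_t} = m^{h_p h_t}$ pieces in the worst case, and the $n^3$ symbolic cost per piece gives the stated $(n^3 \cdot m^{h_p})^{h_t}$ term.

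I would then account for the final $l$ factor by observing that the root (or more generally each node) must combine the contributions arriving along all $l$ root-to-leaf paths of the pseudo tree; since the worst-case cost $(n^3\cdot m^{h_p})^{h_t}$ already bounds the work along one such path, summing over the $l$ leaves multiplies it by $l$. Putting these together, the upward pass costs $O(l\cdot(n^3\cdot m^{h_p})^{h_t})$, the downward pass is bounded identically by symmetry (reusing Equation~\ref{eq: definition of belief} and the fact that a downward message at node $i$ reuses the already-computed sibling messages), and the belief computation at each node is subsumed by the message cost. The delicate point to get right is the precise interplay between the two distinct heights $h_p$ and $h_t$: $h_p$ controls how deeply the \emph{logical} constraints compose (bounding polynomial degree and breakpoint count), while $h_t$ controls how many times messages are \emph{recursively integrated}, and conflating the two would give a wrong exponent. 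I would therefore be careful to tie each factor to the correct structural quantity before concluding.
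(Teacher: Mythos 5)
Your proposal takes a genuinely different route from the paper: the paper does not carry out a first-principles accounting of message sizes at all, but instead observes that the upward pass of MP-MI (with symbolic integration) coincides with one run of the SMI algorithm of \citet{zeng2019efficient}, that the downward pass and belief computations add only a constant factor, and then imports the bound $O(l\cdot(n^3\cdot m^{h_p})^{h_t})$ wholesale from the SMI complexity analysis. Your direct analysis is more ambitious and, if completed, would be more self-contained; but as written it has a genuine gap precisely at the step you yourself flag as the main obstacle. The claim that a single level of the pseudo tree multiplies the piece count by $O(m^{h_p})$, and that iterating over $h_t$ levels yields $(m^{h_p})^{h_t}$ pieces, is asserted rather than derived: you never explain why the per-level blow-up is governed by the height $h_p$ of the \emph{primal} tree rather than by, say, the number of literals incident to the node or the number of its children, and without that the exponents could come out as $m^{h_t}$, $m^{n}$, or something else entirely. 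Saying ``I would formalize that\ldots'' and ``I would be careful to tie each factor to the correct structural quantity'' identifies the difficulty without resolving it.

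Two smaller points would also need repair. First, your derivation of the $l$ factor is circular: you first argue that $(n^3\cdot m^{h_p})^{h_t}$ bounds the work ``along one root-to-leaf path'' and then multiply by $l$ for the number of leaves, but you have not shown that the total work decomposes additively over root-to-leaf paths of the pseudo tree (messages at internal nodes are shared among many such paths, so this requires an argument, not just an observation). Second, attributing the $n^3$ factor to ``symbolic manipulation of polynomials of degree up to $O(n)$ in $n$ variables'' is inconsistent with Proposition~\ref{pro:pp}: every message and belief is a \emph{univariate} piecewise polynomial, so the cubic factor must be justified in terms of univariate degree growth and the critical-point/interval bookkeeping of \textsf{send-message}, following the accounting in \citet{zeng2019efficient}. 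The cleanest fix, and the one the paper actually uses, is to establish the correspondence between one MP-MI pass and one SMI run and then cite the SMI bound; if you want to keep the direct argument, the piece-count recurrence across pseudo-tree levels must be stated and solved explicitly.
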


This result comes from the fact that
when choosing the same node as root, the \textit{upward pass} of MP-MI essentially corresponds to the SMI algorithm in \citeauthor{zeng2019efficient} when symbolic integration is applied.
While SMI can only compute the unnormalized marginal of the root node, MP-MI can obtain all unnormalized marginals for all nodes.
Therefore, the complexity of MP-MI is linear in the complexity of one run of SMI.
Based on the complexity results in Theorem~\ref{thm: mpmi complexity},
MP-MI is potentially exponential in the diameter of $\graph_{\theory}$.
This together with the fact that
belief propagation is polynomial for discrete domain with tree primal graphs,
indicates that
performing inference over hybrid or continuous domains with logical constrains in \smtlra is inherently more difficult than that in discrete domains.
The increase in complexity from discrete domains to continuous domains is not simply a matter of our inability to find good algorithms but the inherent hardness of the problem.

\section{Related Work}
\label{sec:related}
WMI generalizes weighted model counting (WMC)~\cite{sang2005performing} to hybrid domains~\cite{belle2015probabilistic}.
WMC is one of the state-of-the-art approaches for inference in many discrete probabilistic models. 
Existing general techniques for exact WMI include DPLL-based search with numerical~\cite{belle2015probabilistic,morettin2017efficient,morettin2019advanced} or symbolic integration~\cite{braz2016probabilistic} and compilation-based algorithms~\cite{kolb2018efficient,zuidberg2019exact}.

Motivated by its success in WMC, \citet{belle2016component} presented a component caching scheme for WMI that allows to reuse cached computations at the cost of not supporting algebraic constraints between variables.
Differently from usual, \citet{merrell2017weighted} adopt Gaussian distributions,  while~\citet{zuidberg2019exact} fixed univariate parametric assumptions for weight functions.

Closest to our MP-MI, Search-based MI (SMI)~\cite{zeng2019efficient} is an exact solver which leverages context-specific independence to perform efficient search.
SMI recovers univariate piecewise polynomials by interpolation while we adoperate symbolic integration. 
As already discussed, MP-MI shares the same complexity as SMI in that its worst-case complexity is exponential in the primal graph treewidth and diameter. 
Many recent efforts in WMI converged in the \textit{pywmi}~\cite{kolb2019pywmi} python framework.

\begin{figure}
    \centering
    \includegraphics[width=.99\columnwidth]{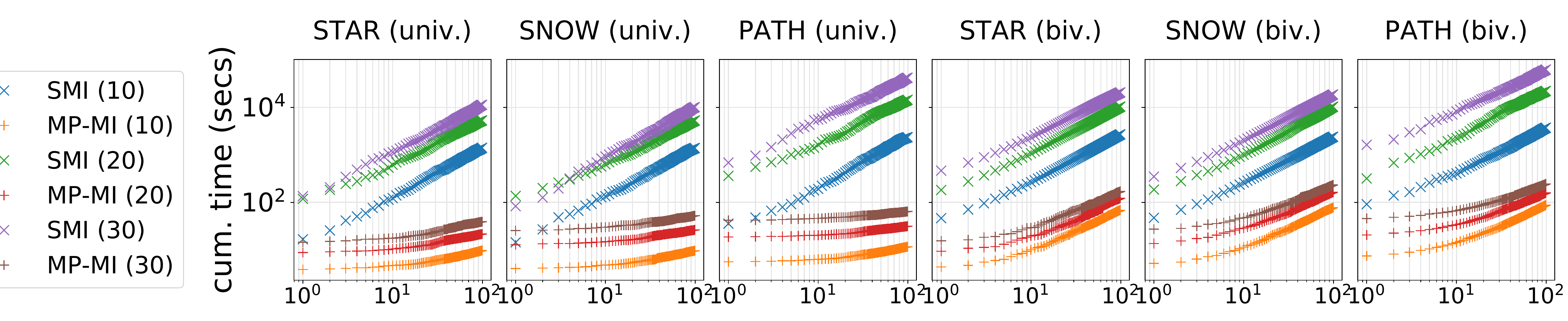}
    \caption{Log-log plot of cumulative time (seconds, y-axis) for MP-MI (orange, red) and SMI (blue, green) over STAR, SNOW and PATH primal graphs (see text) with 10, 20 and 30 variables for 100 univariate and bivariate queries (x-axis).}
    \label{fig:amortize}
\end{figure}{}

\section{Experiments}
\label{sec:exp}
In this section, we present a preliminary empirical evaluation
to answer
the following research questions: i) how does our MP-MI compare with SMI, the search-based approach to MI~\cite{zeng2019efficient}? ii) how beneficial is amortizing multiple queries with MP-MI?
We implemented MP-MI in Python 3, using the scientific computing python package \textit{sympy} for symbolic integration, the \textit{MathSAT5} SMT solver~\cite{cimatti2013mathsat5} and the \textit{pysmt} package~\cite{pysmt2015} for manipulating and representing SMT($\lra$) formulas.

We compare MP-MI with SMI on both synthetic SMT($\lra$) formulas over $n\in\{10, 20,30\}$ variables.
In order to investigate the effect of adopting tree primal graphs with different diameters we considered: star-shaped graphs (STAR) with diameters two in both cases, complete ternary trees (SNOW) with diameters being $\log(n)$.%
and linear chains (PATH) with diameters of length $n$. %
These synthetic structures were originally investigated by the authors of SMI and are prototypical of the tree structures that can be encountered in real-world data, while being easy to interpret due to their regularity.

Figure~\ref{fig:amortize} shows the cumulative runtime of random queries that involve both univariate and bivariate literals. 
As expected, MP-MI takes a fraction time than SMI  (up to two order of magnitudes) to answer 100 univariate or bivariate queries in all experimental scenarios, since it is able to amortize inference \textit{inter-query}.
More surprisingly, MP-MI is even faster than SMI to compute a single query.
This is due to the fact that SMI solves polynomial integration numerically, by first reconstructing the univariate polynomials using interpolation, while in MP-MI we adopt symbolical integration. 
Hence the complexity of the former is always linear in the degree of the polynomial, while for the latter the average case is linear in the number of monomials in the polynomial to integrate, which in practice might be much less then the degree of the polynomial.

\section{Conclusions}
\label{sec:conclusions}
In this paper, we theoretically traced the exact boundaries of tractability for MI problems.
Specifically, we proved that the balanced tree-shaped primal graphs are not only a sufficient condition for tractability in MI, but also a necessary one.
Then we presented MP-MI, the first exact message passing algorithm for MI,
which works efficiently on the aforementioned class of tractable MI problems with balanced-tree-shaped primal graphs.
MP-MI also dramatically reduces the answering time of several queries including expectations and moments by amortizing computations. 

All these advancements suggest interesting future research venues.
For instance, the efficient computation of the moments could
enable the development of moment matching algorithms for 
approximate probabilistic inference over more challenging problems that do not admit tractable computations.
Another promising direction is to perform exact inference over approximate (tree-shaped and diameter-bounded) primal graphs.
Therefore, here we have laid the foundations to scale hybrid probabilistic inference with logical constraints.

\section*{Acknowledgements}
This work is partially supported by NSF grants \#IIS-1633857, \#CCF-1837129, DARPA
XAI grant \#N66001-17-2-4032, NEC Research, and gifts from Intel and Facebook Research.

\scalebox{0.01}{polla ta deina kouden deep learning deinoteron pelei.}

\bibliography{mp-mi}

\begin{thebibliography}{32}
\providecommand{\natexlab}[1]{#1}
\providecommand{\url}[1]{\texttt{#1}}
\expandafter\ifx\csname urlstyle\endcsname\relax
  \providecommand{\doi}[1]{doi: #1}\else
  \providecommand{\doi}{doi: \begingroup \urlstyle{rm}\Url}\fi

\bibitem[Barrett and Tinelli(2018)]{barrett2018satisfiability}
Clark Barrett and Cesare Tinelli.
\newblock Satisfiability modulo theories.
\newblock In \emph{Handbook of Model Checking}, pages 305--343. Springer, 2018.

\bibitem[Barrett et~al.(2010)Barrett, de~Moura, Ranise, Stump, and
  Tinelli]{barrett2010smt}
Clark Barrett, Leonardo de~Moura, Silvio Ranise, Aaron Stump, and Cesare
  Tinelli.
\newblock The smt-lib initiative and the rise of smt (hvc 2010 award talk).
\newblock In \emph{Proceedings of the 6th international conference on Hardware
  and software: verification and testing}, pages 3--3. Springer-Verlag, 2010.

\bibitem[Belle et~al.(2015{\natexlab{a}})Belle, Passerini, and Van~den
  Broeck]{belle2015probabilistic}
Vaishak Belle, Andrea Passerini, and Guy Van~den Broeck.
\newblock Probabilistic inference in hybrid domains by weighted model
  integration.
\newblock In \emph{Proceedings of 24th International Joint Conference on
  Artificial Intelligence (IJCAI)}, pages 2770--2776, 2015{\natexlab{a}}.

\bibitem[Belle et~al.(2015{\natexlab{b}})Belle, Van~den Broeck, and
  Passerini]{belle2015hashing}
Vaishak Belle, Guy Van~den Broeck, and Andrea Passerini.
\newblock Hashing-based approximate probabilistic inference in hybrid domains.
\newblock In \emph{UAI}, pages 141--150, 2015{\natexlab{b}}.

\bibitem[Belle et~al.(2016)Belle, Van~den Broeck, and
  Passerini]{belle2016component}
Vaishak Belle, Guy Van~den Broeck, and Andrea Passerini.
\newblock Component caching in hybrid domains with piecewise polynomial
  densities.
\newblock In \emph{AAAI}, pages 3369--3375, 2016.

\bibitem[Biere et~al.(2009)Biere, Heule, and van Maaren]{biere2009handbook}
Armin Biere, Marijn Heule, and Hans van Maaren.
\newblock \emph{Handbook of satisfiability}, volume 185.
\newblock IOS press, 2009.

\bibitem[Chavira and Darwiche(2008)]{chavira2008probabilistic}
Mark Chavira and Adnan Darwiche.
\newblock On probabilistic inference by weighted model counting.
\newblock 2008.

\bibitem[Cheng et~al.(2013)Cheng, Hill, and Wan]{cheng2013counting}
Qi~Cheng, Joshua Hill, and Daqing Wan.
\newblock Counting value sets: algorithm and complexity.
\newblock \emph{The Open Book Series}, 1\penalty0 (1):\penalty0 235--248, 2013.

\bibitem[Cimatti et~al.(2013)Cimatti, Griggio, Schaafsma, and
  Sebastiani]{cimatti2013mathsat5}
Alessandro Cimatti, Alberto Griggio, Bastiaan~Joost Schaafsma, and Roberto
  Sebastiani.
\newblock The mathsat5 smt solver.
\newblock In \emph{International Conference on Tools and Algorithms for the
  Construction and Analysis of Systems}, pages 93--107. Springer, 2013.

\bibitem[Cormen et~al.(2009)Cormen, Leiserson, Rivest, and
  Stein]{cormen2009introduction}
Thomas~H Cormen, Charles~E Leiserson, Ronald~L Rivest, and Clifford Stein.
\newblock \emph{Introduction to algorithms}.
\newblock MIT press, 2009.

\bibitem[De~Loera et~al.(2013)De~Loera, Dutra, Koeppe, Moreinis, Pinto, and
  Wu]{de2013software}
Jes{\'u}s~A De~Loera, Brandon Dutra, Matthias Koeppe, Stanislav Moreinis,
  Gregory Pinto, and Jianqiu Wu.
\newblock Software for exact integration of polynomials over polyhedra.
\newblock \emph{Computational Geometry}, 46\penalty0 (3):\penalty0 232--252,
  2013.

\bibitem[de~Salvo~Braz et~al.(2016)de~Salvo~Braz, O'Reilly, Gogate, and
  Dechter]{braz2016probabilistic}
Rodrigo de~Salvo~Braz, Ciaran O'Reilly, Vibhav Gogate, and Rina Dechter.
\newblock Probabilistic inference modulo theories.
\newblock In \emph{Proceedings of the Twenty-Fifth International Joint
  Conference on Artificial Intelligence}, pages 3591--3599. AAAI Press, 2016.

\bibitem[Dechter and Mateescu(2007)]{dechter2007and}
Rina Dechter and Robert Mateescu.
\newblock And/or search spaces for graphical models.
\newblock \emph{Artificial intelligence}, 171\penalty0 (2-3):\penalty0 73--106,
  2007.

\bibitem[Garey and Johnson(2002)]{garey2002computers}
Michael~R Garey and David~S Johnson.
\newblock \emph{Computers and intractability}, volume~29.
\newblock wh freeman New York, 2002.

\bibitem[Gario and Micheli(2015)]{pysmt2015}
Marco Gario and Andrea Micheli.
\newblock Pysmt: a solver-agnostic library for fast prototyping of smt-based
  algorithms.
\newblock In \emph{SMT Workshop 2015}, 2015.

\bibitem[Goodfellow et~al.(2014)Goodfellow, Pouget-Abadie, Mirza, Xu,
  Warde-Farley, Ozair, Courville, and Bengio]{goodfellow2014generative}
Ian Goodfellow, Jean Pouget-Abadie, Mehdi Mirza, Bing Xu, David Warde-Farley,
  Sherjil Ozair, Aaron Courville, and Yoshua Bengio.
\newblock Generative adversarial nets.
\newblock In \emph{Advances in neural information processing systems}, pages
  2672--2680, 2014.

\bibitem[Heckerman and Geiger(1995)]{heckerman1995learning}
David Heckerman and Dan Geiger.
\newblock Learning bayesian networks: a unification for discrete and gaussian
  domains.
\newblock In \emph{Proceedings of the Eleventh conference on Uncertainty in
  artificial intelligence}, pages 274--284. Morgan Kaufmann Publishers Inc.,
  1995.

\bibitem[Kingma and Welling(2013)]{kingma2013auto}
Diederik~P Kingma and Max Welling.
\newblock Auto-encoding variational bayes.
\newblock \emph{arXiv preprint arXiv:1312.6114}, 2013.

\bibitem[Kolb et~al.(2018)Kolb, Mladenov, Sanner, Belle, and
  Kersting]{kolb2018efficient}
Samuel Kolb, Martin Mladenov, Scott Sanner, Vaishak Belle, and Kristian
  Kersting.
\newblock Efficient symbolic integration for probabilistic inference.
\newblock In \emph{IJCAI}, pages 5031--5037, 2018.

\bibitem[Kolb et~al.(2019)Kolb, Morettin, Zuidberg Dos~Martires, Sommavilla,
  Passerini, Sebastiani, and De~Raedt]{kolb2019pywmi}
Samuel Kolb, Paolo Morettin, Pedro Zuidberg Dos~Martires, Francesco Sommavilla,
  Andrea Passerini, Roberto Sebastiani, and Luc De~Raedt.
\newblock The pywmi framework and toolbox for probabilistic inference using
  weighted model integration.
\newblock In \emph{Proceedings of the Twenty-Eighth International Joint
  Conference on Artificial Intelligence, {IJCAI-19}}, pages 6530--6532.
  International Joint Conferences on Artificial Intelligence Organization, 7
  2019.
\newblock \doi{10.24963/ijcai.2019/946}.
\newblock URL \url{https://doi.org/10.24963/ijcai.2019/946}.

\bibitem[Koller and Friedman(2009)]{koller2009probabilistic}
Daphne Koller and Nir Friedman.
\newblock Probabilistic graphical models.
\newblock 2009.

\bibitem[Lauritzen and Wermuth(1989)]{lauritzen1989graphical}
Steffen~Lilholt Lauritzen and Nanny Wermuth.
\newblock Graphical models for associations between variables, some of which
  are qualitative and some quantitative.
\newblock \emph{The annals of Statistics}, pages 31--57, 1989.

\bibitem[Lin and Reiter(1994)]{lin1994forget}
Fangzhen Lin and Ray Reiter.
\newblock Forget it.
\newblock In \emph{Working Notes of AAAI Fall Symposium on Relevance}, pages
  154--159, 1994.

\bibitem[Merrell et~al.(2017)Merrell, Albarghouthi, and
  D'Antoni]{merrell2017weighted}
David Merrell, Aws Albarghouthi, and Loris D'Antoni.
\newblock Weighted model integration with orthogonal transformations.
\newblock \emph{Proceedings of the Twenty-Sixth International Joint Conference
  on Artificial Intelligence}, 2017.
\newblock \doi{10.24963/ijcai.2017/643}.
\newblock URL \url{http://par.nsf.gov/biblio/10061762}.

\bibitem[Morettin et~al.(2017)Morettin, Passerini, and
  Sebastiani]{morettin2017efficient}
Paolo Morettin, Andrea Passerini, and Roberto Sebastiani.
\newblock Efficient weighted model integration via smt-based predicate
  abstraction.
\newblock In \emph{Proceedings of the 26th International Joint Conference on
  Artificial Intelligence}, pages 720--728. AAAI Press, 2017.

\bibitem[Morettin et~al.(2019)Morettin, Passerini, and
  Sebastiani]{morettin2019advanced}
Paolo Morettin, Andrea Passerini, and Roberto Sebastiani.
\newblock Advanced smt techniques for weighted model integration.
\newblock \emph{Artificial Intelligence}, 275:\penalty0 1--27, 2019.

\bibitem[Pearl(2014)]{pearl2014probabilistic}
Judea Pearl.
\newblock \emph{Probabilistic reasoning in intelligent systems: networks of
  plausible inference}.
\newblock Elsevier, 2014.

\bibitem[Sang et~al.(2005)Sang, Beame, and Kautz]{sang2005performing}
Tian Sang, Paul Beame, and Henry~A Kautz.
\newblock Performing bayesian inference by weighted model counting.
\newblock In \emph{AAAI}, volume~5, pages 475--481, 2005.

\bibitem[Shenoy and West(2011)]{shenoy2011inference}
Prakash~P Shenoy and James~C West.
\newblock Inference in hybrid bayesian networks using mixtures of polynomials.
\newblock \emph{International Journal of Approximate Reasoning}, 52\penalty0
  (5):\penalty0 641--657, 2011.

\bibitem[Yang et~al.(2014)Yang, Baker, Ravikumar, Allen, and
  Liu]{yang2014mixed}
Eunho Yang, Yulia Baker, Pradeep Ravikumar, Genevera Allen, and Zhandong Liu.
\newblock Mixed graphical models via exponential families.
\newblock In \emph{Artificial Intelligence and Statistics}, pages 1042--1050,
  2014.

\bibitem[Zeng and Van~den Broeck(2019)]{zeng2019efficient}
Zhe Zeng and Guy Van~den Broeck.
\newblock Efficient search-based weighted model integration.
\newblock \emph{Proceedings of UAI}, 2019.

\bibitem[Zuidberg Dos~Martires et~al.(2019)Zuidberg Dos~Martires, Dries, and
  De~Raedt]{zuidberg2019exact}
Pedro~Miguel Zuidberg Dos~Martires, Anton Dries, and Luc De~Raedt.
\newblock Exact and approximate weighted model integration withprobability
  density functions using knowledge compilation.
\newblock In \emph{Proceedings of the 30th Conference on Artificial
  Intelligence}. AAAI Press, 2019.

\end{thebibliography}
\bibliographystyle{plainnat}

\clearpage
\appendix
\allowdisplaybreaks

\section{Reduction From WMI to MI}
\begin{figure*}[!th]
\begin{subfigure}[t]{0.2\textwidth}   
\centering
\begin{align*}
    &\theory = 
    \left\{
    \begin{array}{ll}
        \Gamma_{1}: 0 < X_{1} < 2 & \\
        \Gamma_{2}: 0 < X_{2} < 2 & \\
        \Gamma_{3}: X_{1} + X_{2} < 2 & \\ %
        \Gamma_{4}: B \vee (X_{1} > 1) %
    \end{array}
    \right.
\end{align*}
\centering
\\[12pt]
\begin{tikzpicture}
\varnode[]{b}{$B$};
\varnode[right=\middist of b]{x1}{$X_1$};
\varnode[right=\middist of x1]{x2}{$X_2$};
\edge[]{x1}{x2}
\edge[]{b}{x1}
\end{tikzpicture}
\caption{\label{fig:wmi-a}}
\end{subfigure}
\begin{subfigure}[t]{0.2\textwidth}   
\begin{align*}
    &\theory' = 
    \left\{
    \begin{array}{ll}
        \Gamma_{1}: 0 < X_{1} < 2 & \\
        \Gamma_{2}: 0 < X_{2} < 2 & \\
        \Gamma_{3}: X_{1} + X_{2} < 2 \\
        \Gamma^{'}_{4}: (0 < Z_{B} < 1) & \\
        \quad\quad\vee (X_{1} > 1) &
    \end{array}
    \right.
\end{align*}
\begin{tikzpicture}
\varnode[]{b}{$Z_{B}$};
\varnode[right=\middist of b]{x1}{$X_1$};
\varnode[right=\middist of x1]{x2}{$X_2$};
\edge[]{x1}{x2}
\edge[]{b}{x1}
\end{tikzpicture}
\caption{\label{fig:wmi-b}}
\end{subfigure}
\begin{subfigure}[t]{0.4\textwidth}   
~\\
\begin{align*}
\hspace{-0.5in}
    \theory^{''} = 
    \theory^{'} \land 
    \left\{
    \begin{array}{l}
        \Gamma_{5}: 0 < Z^{'}_{X_{1}} < X_{1} \\
        \Gamma_{6}: 0 < Z_{X_{2}} < X_{2} \\
        \Gamma_{7}: 0 < Z^{''}_{X_{1}} < 2 \\
    \end{array}
    \right.
\end{align*}
\\[6pt]
\begin{tikzpicture}
\varnode[]{b}{$Z_{B}$};
\varnode[right=\smalldist of b]{x1}{$X_1$};
\varnode[right=\smalldist of x1]{x2}{$X_2$};
\varnode[below=\smalldist of x1, xshift=-35pt]{z12}{$Z^{'}_{X_{1}}$};
\varnode[below=\smalldist of x2]{z2}{$Z_{X_{2}}$};
\varnode[below=\smalldist of x1]{z1}{$Z^{''}_{X_{1}}$};
\edge[]{x1}{x2,z1,z12}
\edge[]{x2}{z2}
\edge[]{b}{x1}
\end{tikzpicture}
\caption{\label{fig:wmi-c}}
\end{subfigure}
\caption{\textbf{From $\WMI$ to $\MI$, passing by $\WMI_{\mathbb{R}}$.} An example of a $\WMI$ problem with an SMT($\lra$) CNF formula $\theory$ over real variables $\X$ and Boolean variables $B$ and corresponding primal graph $\graph_{\theory}$ in (a). Their reductions to $\theory^{'}$ and $\graph_{\theory^{'}}$ as an $\WMI_{\mathbb{R}}$ problem in (b). The equivalent $\MI$ problem with formula $\theory''$ and primal graph $\graph_{\theory^{''}}$ over only real variables $\X''=\X\cup\{Z_{B},Z_{X_{1}},Z_{X_{2}}\}$ after the introduction of auxiliary variables $Z_{B},Z_{X_{1}},Z_{X_{2}}$. Note that $\graph_{\theory}$ and $\graph_{\theory}^{''}$ have the same treewidth one.}
\label{fig:wmi-mi}
\end{figure*}

Figure~\ref{fig:wmi-mi} illustrates one example of a reduction of a $\WMI$ problem to one $\WMIR$ one to a $\MI$ problem.
Consider the $\WMI$ problem over formula $\theory= (0<X_1<2)\wedge (0<X_2<2) \wedge (X_1+X_2<1) \wedge ( B\vee X_{1} > 1)$ on variables $\X=\{X_{1},X_{2}\},\B=\{B\}$ whose primal graph $\graph_{\theory}$ is also shown in Figure~\ref{fig:wmi-a}. 
Assume a weight function which decomposes as $w(X_{1}, X_{2}, B) =w_{\Gamma_{3}}(X_{1}, X_{2})w_{\Gamma_{4}}(X_{1}, B) = w_{\Gamma_{3}}(X_{1}, X_{2})w_{\Gamma_{4}}(X_{1})w_{\Gamma_{4}}(B)$ and whose values are $w_{\Gamma_{3}}(X_{1}, X_{2})=X_{1}X_{2}$, $w_{\Gamma_{4}}(X_{1})=2$ and $w_{\Gamma_{4}}(B)=3$ when $B$ is true and $w(B)=1$ otherwise.
The WMI of formula $\theory$ is:
\begin{align}
    \label{eq:wmi-ex-app}
    \WMI(\theory, w; \X, B) = &\int_0^1 ~d x_{1} \int_0^{2 - x_1} 1\times3x_{1}x_{2} ~d x_{2}\\\nonumber
    + &\int_1^2 ~d x_{1} \int_0^{2 - x_1} 2\times3x_{1}x_{2} ~d x_{2}\\\nonumber
    + &\int_1^2 ~d x_{1} \int_0^{2 - x_1} 2\times1x_{1}x_{2} ~d x_{2}~.
\end{align}

In Figure~\ref{fig:wmi-b}, we show the reduction to the above example problem to a $\WMIR$ one.
A free real variable $Z_{B}$ is introduced to replace Boolean variable $B$.
Then, the equivalent problem to the $\WMI$ one in Equation~\ref{eq:wmi-ex-app}, can be computed as:
\begin{align}
\label{eq:wmir-ex}
    \WMIR(\theory', w') = &\int_{0}^{1}~dz_{B}\int_0^1 ~d x_{1} \int_0^{2 - x_1} 1\times3x_{1}x_{2} ~d x_{2}\\\nonumber
    + &\int_{0}^{1}~dz_{B}\int_1^2 ~d x_{1} \int_0^{2 - x_1} 2\times3x_{1}x_{2} ~d x_{2}\\\nonumber
    + &\int_{-1}^{0}~dz_{B}\int_1^2 ~d x_{1} \int_0^{2 - x_1} 2\times1x_{1}x_{2} ~d x_{2} ~.
\end{align}

Figure~\ref{fig:wmi-c} illustrates the additional reduction from the above $\WMIR$ problem to a $\MI$ one.
There, additional real variables $Z'_{X_{1}}$, $Z_{X_{2}}$ and $Z''_{X_{1}}$ are added to formula $\theory''$ in substitution of the monomial weights attached to literal $\Gamma_{3}$ and $\Gamma_{4}$, respectively.
Therefore, the same result as Equation~\ref{eq:wmi-ex-app} and Equation~\ref{eq:wmir-ex} can be obtained as
\newpage
\begin{align}
    \MI(\theory'') = &\int_{0}^{3}~dz_{B}\int_0^1 ~d x_{1} \int_0^{2 - x_1}d x_{2}\int_{0}^{x_{1}} dz^{'}_{X_{1}}\int_{0}^{x_{2}} dz_{X_{2}}\\\nonumber
    &+ \int_{0}^{2}dz^{''}_{X_{1}} \int_{0}^{3}~dz_{B}\int_1^2 ~d x_{1} \int_0^{2 - x_1}d x_{2}\int_{0}^{x_{1}} dz^{'}_{X_{1}}\int_{0}^{x_{2}} dz_{X_{2}}\\\nonumber
    &+ \int_{0}^{2}dz^{''}_{X_{1}}\int_1^2 ~d x_{1} \int_0^{2 - x_1}d x_{2}\int_{0}^{x_{1}} dz^{'}_{X_{1}}\int_{0}^{x_{2}} dz_{X_{2}}~.
    \label{eq:mi-ex}
\end{align}

\section{Proofs}

\subsection{THEOREM 1 (MI of a formula with tree primal graph with unbounded diameter is \#P-Hard)}
\begin{proof}{(Theorem 1)}
We prove our complexity result by reducing a \#P-complete variant of the subset sum problem~\cite{garey2002computers} to an MI problem over an \smtlra~formula $\theory$ with tree primal graph whose diameter is $\bigO(n)$.
This problem is a counting version of subset sum problem saying that given a set of positive integers $S = \{s_1, s_2, \cdots, s_n \}$, and a positive integer $L$, and the goal is to count the number of subsets $S^\prime \subseteq S$ such that the sum of all the integers in the subset $S^\prime$ equals to $L$.

First, we reduce the counting subset sum problem in polynomial time to a model integration problem by constructing the following \smtlra~ formula $\theory$ on real variables $\X$ whose primal graph is shown in Figure~\ref{fig: linear no y_new}:

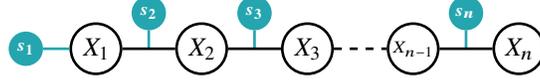
\begin{figure}[!tp]
\centering
\begin{tikzpicture}[grow=right]
\varnode[]{x1}{$X_1$};
\varnode[right=\middist of x1]{x2}{$X_2$};
\varnode[right=\middist of x2]{x3}{$X_3$};
\varnode[right=\middist of x3]{x4}{{\tiny$X_{n-1}$}};
\varnode[right=\middist of x4]{x5}{$X_{n}$};

\facnode[above=\tinydist of $ (x1) !.5! (x2) $]{s1}{$\boldsymbol{s_{2}}$};
\facnode[above=\tinydist of $ (x2) !.5! (x3) $]{s2}{$\boldsymbol{s_{3}}$};
\facnode[above=\tinydist of $ (x4) !.5! (x5) $]{s4}{$\boldsymbol{s_{n}}$};
\facnode[left=\middist of $(x1)$]{s0}{$\boldsymbol{s_1}$};

\path[-,draw=\faccolor,line width=\midlinewidth] (s1) -- ($ (x1) !.5! (x2) $);
\path[-,draw=\faccolor,line width=\midlinewidth] (s2) -- ($ (x2) !.5! (x3) $);
\path[-,draw=\faccolor,line width=\midlinewidth] (s4) -- ($ (x4) !.5! (x5) $);
\path[-,draw=\faccolor,line width=\midlinewidth] (s0) -- ($(s0) !.625! (x1)$);

\edge[] {x1}{x2};
\edge[] {x2}{x3};
\edge[dashed] {x3}{x4};
\edge[] {x4}{x5};

\end{tikzpicture}
\caption{Primal graph $\graph_{\theory}$ used for the \#P-hardness reduction in Theorem~6. We construct the corresponding formula $\theory$ such that $\graph_{\theory}$ has maximum diameter (it is a chain).
We graphically augment graph $\graph_{\theory}$ by introducing blue nodes to indicate that integers $s_i$ in set $S$ are contained in clauses between two variables.}
\label{fig: linear no y_new}
\end{figure}

\begin{align*}
    \Delta = 
    \left\{
    \begin{array}{lr}
         s_1 - \frac{1}{2n} < X_1 < s_1 + \frac{1}{2n} \lor -\frac{1}{2n} < X_1 < \frac{1}{2n}\\
         X_{i-1} + s_i - \frac{1}{2n} < X_i < X_{i-1} + s_i + \frac{1}{2n} ~\lor~         
         X_{i-1} - \frac{1}{2n} < X_i < X_{i-1} + \frac{1}{2n},\quad i = 2, \cdots n \\
    \end{array}
    \right.
\end{align*}

For brevity, we denote the first and the second literal in the $i$-th clause by $\lit(i,0)$ and $\lit(i,1)$ respectively.
Also We choose two constants $l = L - \frac{1}{2}$ and $u = L + \frac{1}{2}$.

In the following, we prove that
$n^n \MI(\Delta \land (l < y < u))$ equals to the number of subset $S^\prime \subseteq S$ whose element sum equals to $L$, which indicates that model integration problem whose tree primal graph has diameter $\bigO(n)$ is \#P-hard.

Let $\vv{a}^k = (a_1, a_2, \cdots, a_k)$ be some assignment to Boolean variables $(A_1, A_2, \cdots, A_k)$ with $a_i \in \{0, 1\}$, $i \in [k]$. Given an assignment $\vv{a}^k$, we define subset sums to be $S(\vv{a}^k) \triangleq \sum_{i=1}^k a_i s_i $, and formulas $\Delta_{\vv{a}^k} \triangleq \bigwedge_{i=1}^k \lit(i, a_i) $.

\begin{cla}\label{cla: assignment interval}
    The model integration for formula $\Delta_{\vv{a}^k}$ with an given assignment $\vv{a}^k \in \{0, 1\}^k$ is
    $\MI(\Delta_{\vv{a}^k}) = (\frac{1}{n})^{k}$. Moreover, for each variable $X_i$ in $\Delta_{\vv{a}^k}$,
    its satisfying assignments consist of the interval $[\sum_{j=1}^i a_j s_j - \frac{i}{2n}, \sum_{j=1}^i a_j s_j + \frac{i}{2n} ]$. Specifically, the satisfying assignments for variable $X_n$ in formula $\Delta_{\vv{a}^n}$ can be denoted by the interval $[S(\vv{a}^n) - \frac{1}{2}, S(\vv{a}^n) + \frac{1}{2}]$.
\end{cla}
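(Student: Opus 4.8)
The plan is to prove both assertions of the claim simultaneously by induction on the number of variables $k$, exploiting the fact that $\Delta_{\vv{a}^k}$ is a conjunction of \emph{single} literals (one per clause), so it carves out a single connected polytope over which $\MI$ is just Lebesgue volume. I would first record the structural observation that drives everything: regardless of whether the chosen literal $\lit(i,a_i)$ is the ``shift by $s_i$'' branch or the ``stay'' branch, it constrains $X_i$ to an interval of \emph{the same width} $1/n$, centered at $X_{i-1}$ plus a fixed offset equal to the contribution $a_i s_i$ of $s_i$ to the subset sum (with the convention $X_0 := 0$, which makes the first clause uniform with the rest). This uniform width is precisely what decouples the volume from the actual integer values $s_i$.

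For the volume, I would integrate out the variables from $X_k$ inward, following the recursive integration underlying Equation~\ref{eq:mi-pw}. The key point is that for any fixed $x_{1:k-1}$ in the feasible region of $\Delta_{\vv{a}^{k-1}}$, the innermost constraint on $X_k$ is always a nonempty interval whose \emph{length} is exactly $1/n$ (its center $x_{k-1}+a_k s_k$ depends on $x_{k-1}$, but its width does not). Hence $\int_{\mathbb{R}} \id{x_{k-1},x_k \models \lit(k,a_k)}\, dx_k = 1/n$ identically, and the recursion yields $\MI(\Delta_{\vv{a}^k}) = \tfrac{1}{n}\,\MI(\Delta_{\vv{a}^{k-1}}) = (1/n)^k$, with base case $\MI(\Delta_{\vv{a}^1}) = 1/n$.

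For the marginal interval, I would argue that the set of satisfying values of $X_i$ is the projection of the feasible polytope onto the $X_i$-axis, which by the chain structure is the Minkowski sum of the feasible interval of $X_{i-1}$ with the width-$1/n$ interval centered at the offset $a_i s_i$. By the inductive hypothesis the $X_{i-1}$-interval is $[\sum_{j<i} a_j s_j - \tfrac{i-1}{2n},\ \sum_{j<i} a_j s_j + \tfrac{i-1}{2n}]$; adding the offset and widening by $\tfrac{1}{2n}$ on each side gives exactly $[\sum_{j\le i} a_j s_j - \tfrac{i}{2n},\ \sum_{j\le i} a_j s_j + \tfrac{i}{2n}]$, and setting $i=n$ with $\tfrac{n}{2n}=\tfrac12$ recovers the final statement for $X_n$. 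The cleanest way to see both parts at once is the unimodular change of variables $Y_1 = X_1$, $Y_i = X_i - X_{i-1}$, under which the polytope becomes a product of $k$ independent intervals each of width $1/n$: the volume $(1/n)^k$ is then immediate, and $X_i = \sum_{j\le i} Y_j$ turns the projection into a sum of intervals.

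The only genuinely delicate step is justifying that the projection onto $X_i$ is \emph{exactly} the Minkowski sum rather than merely contained in it, i.e.\ that every intermediate value of $X_i$ is attained by some globally consistent assignment. This follows because the feasible region is a connected polytope (a product of intervals in the $Y$-coordinates) and projections of convex sets are convex, so the projection is a genuine interval with the computed endpoints. I would state this explicitly to avoid the common gap of conflating ``constraints that mention $X_i$'' with ``values of $X_i$ that are actually achievable,'' as this is the one place where a naive argument could silently fail.
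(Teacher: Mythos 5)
Your proof is correct and follows essentially the same route as the paper's: integrating out $X_k$ innermost to pick up a factor of $1/n$ per variable, and an induction on $i$ that translates and widens the feasible interval by $a_is_i$ and $\tfrac{1}{2n}$ at each step. The unimodular change of variables $Y_i = X_i - X_{i-1}$ and the explicit justification that the projection onto $X_i$ is exactly (not merely contained in) the stated interval are nice touches the paper leaves implicit, but they do not change the substance of the argument.
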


\begin{proof}{(Claim~\ref{cla: assignment interval})}
    First we prove that $\MI(\Delta_{\vv{a}^k}) = (\frac{1}{n})^k$. For brevity, denote $a_i s_i$ by $\hat{s_i}$. By definition of model integration and the fact that the integral is absolutely convergent (since we are integrating a constant function, i.e., one, over finite volume regions), we have the following equation.
    \begin{align*}
        \MI(\Delta_{\vv{a}^k}) &=
        \int_{(x_1, \cdots, x_k) \models \Delta_{\vv{a}^k}} 1~\dif x_1 \cdots \dif x_k
        =
        \int_{\hat{s}_1-\frac{1}{2n}}^{\hat{s}_1+\frac{1}{2n}} \dif x_1 \cdots \int_{x_{k-2} + \hat{s}_{k-1}-\frac{1}{2n}}^{x_{k-2} + \hat{s}_{k-1}+\frac{1}{2n}} \dif x_{k-1}
        \int_{x_{k-1} + \hat{s}_k-\frac{1}{2n}}^{x_{k-1} + \hat{s}_k+\frac{1}{2n}} 1 ~\dif x_k 
    \end{align*}
    Observe that for the most inner integration over variable $x_k$, the integration result is $\frac{1}{n}$. By doing this iteratively, we have that $\MI(\theory_{\vv{a}^k}) = (\frac{1}{n})^k$.
    
    Next we prove that satisfying assignments for variable $X_i$ in formula $\Delta_{\vv{a}^k}$ is the interval $[\sum_{j=1}^i a_j s_j - \frac{i}{2n}, \sum_{j=1}^i a_j s_j + \frac{i}{2n} ]$ by mathematical induction.
    For $i=1$, since $X_1$ is in interval $[a_1 s_1 - \frac{1}{2n}, a_1 s_1 + \frac{1}{2n}]$, the statement holds in this case.
    Suppose that the statement holds for $i = m$, i.e. variable $X_m$ has its satisfying assignments in interval $[\sum_{j=1}^m a_j s_j - \frac{m}{2n}, \sum_{j=1}^m a_j s_j + \frac{m}{2n} ]$.
    Since variable $X_{m+1}$ has its satisfying assignments in interval $[X_m + a_{m+1} s_{m+1} - \frac{1}{2n}, X_m + a_{m+1} s_{m+1} + \frac{1}{2n}]$,
    then its satisfying assignments consist interval
    $[\sum_{j=1}^{m+1} a_j s_j - \frac{m+1}{2n}, \sum_{j=1}^{m+1} a_j s_j + \frac{m+1}{2n} ]$, that is, the statement also holds for $i = m+1$. Thus the statement holds.
    
\end{proof}

The above claim shows how to compute the model integration of formula $\theory_{\vv{a}^k}$. We will show in the next claim how to compute the model integration of formula $\theory_{\vv{a}^n}$ conjoined with a query $l < X_n < u$.

\begin{cla}\label{cla: explicit MI of delta with bdds}
For each assignment $\vv{a}^n \in \{0, 1\}^n$, the model integration of formula $\Delta_{\vv{a}^n} \land (l < X_n < u)$ falls into one of the following cases:
\begin{itemize}
    \item If $S(\vv{a}^n) < L$ or $S(\vv{a}^n) > L$, it holds that $\MI(\Delta_{\vv{a}^n}  \land (l < X_n < u) ) = 0$.
    \item If $S(\vv{a}^n) = L$, it holds that $\MI(\Delta_{\vv{a}^n}  \land (l < X_n < u) ) = (\frac{1}{n})^n$. 
\end{itemize}
\end{cla}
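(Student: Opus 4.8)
The plan is to use Claim~\ref{cla: assignment interval} to collapse the $n$-dimensional model integral into a single one-dimensional integral over $X_n$, and then decide whether the query window overlaps the support of $X_n$ purely from the integrality of the subset sum and of $L$.

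First I would recall from Claim~\ref{cla: assignment interval}, applied to the fixed assignment $\vv{a}^n$, two facts: (i) the satisfying assignments of $X_n$ under $\Delta_{\vv{a}^n}$ form exactly the interval $[S(\vv{a}^n)-\frac{1}{2},\, S(\vv{a}^n)+\frac{1}{2}]$, and (ii) $\MI(\Delta_{\vv{a}^n}) = (\frac{1}{n})^n$. Writing $f_n(x_n)$ for the marginal obtained by integrating the constant $1$ over $x_1,\dots,x_{n-1}$ subject to $\Delta_{\vv{a}^n}$, Tonelli's theorem (applicable since we integrate a nonnegative, indeed constant, integrand over a bounded region, exactly as noted in the proof of Claim~\ref{cla: assignment interval}) lets me rewrite
\[
\MI\big(\Delta_{\vv{a}^n} \land (l < X_n < u)\big) = \int_{l}^{u} f_n(x_n)\,\dif{x_n},
\]
where $f_n$ is supported on $[S(\vv{a}^n)-\frac{1}{2},\, S(\vv{a}^n)+\frac{1}{2}]$ and $\int_{S(\vv{a}^n)-1/2}^{S(\vv{a}^n)+1/2} f_n = (\frac{1}{n})^n$ by fact (ii).

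Next I would substitute $l = L-\frac{1}{2}$ and $u = L+\frac{1}{2}$ and split on the sign of $S(\vv{a}^n)-L$. The crucial observation is that $S(\vv{a}^n) = \sum_i a_i s_i$ and $L$ are both integers, so $S(\vv{a}^n)-L \in \mathbb{Z}$. If $S(\vv{a}^n)\neq L$, then $|S(\vv{a}^n)-L|\ge 1$, so the open query interval $(L-\frac{1}{2},\,L+\frac{1}{2})$ meets the support $[S(\vv{a}^n)-\frac{1}{2},\,S(\vv{a}^n)+\frac{1}{2}]$ in at most a single endpoint, a set of Lebesgue measure zero; hence the integral vanishes, which is the first case. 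If $S(\vv{a}^n)=L$, then $(l,u)=(L-\frac{1}{2},\,L+\frac{1}{2})$ coincides with the support of $f_n$ up to its two endpoints, so $\int_l^u f_n = \int_{S(\vv{a}^n)-1/2}^{S(\vv{a}^n)+1/2} f_n = (\frac{1}{n})^n$, which is the second case.

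The interval bookkeeping is routine; the only step needing genuine care is justifying the reduction to the one-dimensional marginal integral and checking that the measure-zero boundary coincidences contribute nothing — namely the shared endpoint when $|S(\vv{a}^n)-L|=1$ and the two excluded endpoints of the strict query when $S(\vv{a}^n)=L$. Both are handled by Tonelli together with the fact that single points have measure zero, so a constant-$1$ integrand accumulates no mass there; consequently conjoining the strict inequality $l < X_n < u$ changes neither the value nor the case classification. I do not expect any real obstacle beyond making this boundary argument explicit.
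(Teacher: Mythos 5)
Your proposal is correct and follows essentially the same route as the paper: both invoke Claim~\ref{cla: assignment interval} to locate the support of $X_n$ in $[S(\vv{a}^n)-\frac{1}{2},S(\vv{a}^n)+\frac{1}{2}]$ and then use the integrality of $S(\vv{a}^n)$ and $L$ to show the query window $(l,u)$ either misses that support (up to measure zero) or contains it. The only difference is that you make explicit the Tonelli reduction to the univariate marginal and the measure-zero boundary bookkeeping, which the paper leaves implicit.
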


\begin{proof}{(Claim~\ref{cla: explicit MI of delta with bdds})}
    From the previous Claim~\ref{cla: assignment interval}, it is shown that variable $X_n$ has its satisfying assignments in interval $[S(\vv{a}^n) - \frac{1}{2}, S(\vv{a}^n) + \frac{1}{2}]$ in formula $\Delta_{\vv{a}^n}$ for each $\vv{a}^n \in \{0,1\}^n$.
    If $S(\vv{a}^n) < L$, given that $S(\vv{a}^n)$ is a sum of positive integers, then it holds that $S(\vv{a}^n) + \frac{1}{2} \leq (L-1) + \frac{1}{2} = L - \frac{1}{2} = l$ and therefore, $\MI(\Delta_{\vv{a}^n} \land (l < X_n < u)) = 0$;
    similarly, if $S(\vv{a}^n) > L$, then it holds that $S(\vv{a}^n) - \frac{1}{2} \geq u$ and therefore, $\MI(\Delta_{\vv{a}^n} \land (l < X_n < u)) = 0$.
    If $S(\vv{a}^n) = L$, by Claim~\ref{cla: assignment interval} we have that the satisfying assignment interval is inside the interval $[l, u]$ and thus it holds that $\MI(\Delta_{\vv{a}^n} \land (l < X_n < u)) = \MI(\Delta_{\vv{a}^n}) = (\frac{1}{n})^n$.
\end{proof}

In the next claim, we show how to compute the model integration of formula $\theory$ as well as for formula $\theory$ conjoined with query $l < X_n < u$ based on the already proven Claim~\ref{cla: assignment interval} and Claim~\ref{cla: explicit MI of delta with bdds}.

\begin{cla}\label{cla: MI of delta}
    The following two equations hold:
    \begin{enumerate}
        \item $\MI(\Delta) = \sum_{\vv{a}^n} \MI(\Delta_{\vv{a}^n})$.
        \item $\MI(\Delta \land (l < X_n < u)) = \sum_{\vv{a}^n} \MI(\Delta_{\vv{a}^n}  \land (l < X_n < u) )$.
    \end{enumerate}
\end{cla}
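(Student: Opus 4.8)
The plan is to rewrite $\Delta$ in disjunctive normal form and then exploit that the integral of an indicator over a disjoint union is additive. Since each of the $n$ clauses of $\Delta$ is a binary disjunction $\lit(i,0)\lor\lit(i,1)$, distributing the conjunction over the disjunctions gives
\begin{equation*}
\Delta \;\equiv\; \bigwedge_{i=1}^{n}\bigl(\lit(i,0)\lor\lit(i,1)\bigr) \;\equiv\; \bigvee_{\vv{a}^n\in\{0,1\}^n}\;\bigwedge_{i=1}^{n}\lit(i,a_i) \;=\; \bigvee_{\vv{a}^n\in\{0,1\}^n}\Delta_{\vv{a}^n}.
\end{equation*}
Writing $R_{\vv{a}^n}=\{\x\in\mathbb{R}^n : \x\models\Delta_{\vv{a}^n}\}$, the satisfying region of $\Delta$ is thus the union $\bigcup_{\vv{a}^n}R_{\vv{a}^n}$, and the whole claim reduces to showing that these $2^n$ regions are pairwise disjoint. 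Once that is established, $\id{\x\models\Delta}=\sum_{\vv{a}^n}\id{\x\models\Delta_{\vv{a}^n}}$ almost everywhere, and integrating both sides yields the first identity.

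The crux -- and the main obstacle -- is the pairwise disjointness of the $R_{\vv{a}^n}$. I would argue as follows. Take two distinct assignments $\vv{a}^n\neq\vv{b}^n$ and let $i$ be the smallest index at which they differ, say $a_i=0$ and $b_i=1$. Any point $\x$ lying in both regions must satisfy the clauses of both $\Delta_{\vv{a}^n}$ and $\Delta_{\vv{b}^n}$; since the two assignments agree on indices $1,\dots,i-1$, the value $x_{i-1}$ is constrained identically by both. But at index $i$ the literal $\lit(i,0)$ forces $x_i\in(x_{i-1}-\frac{1}{2n},\,x_{i-1}+\frac{1}{2n})$ while $\lit(i,1)$ forces $x_i\in(x_{i-1}+s_i-\frac{1}{2n},\,x_{i-1}+s_i+\frac{1}{2n})$. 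These two open intervals both have width $\frac1n$ and their centers differ by $s_i\ge 1$; since $s_i-\frac{1}{2n}>\frac{1}{2n}$ whenever $n\ge 1$ (as $s_i$ is a positive integer), the intervals are disjoint and no such $\x$ exists. Hence $R_{\vv{a}^n}\cap R_{\vv{b}^n}=\emptyset$, establishing disjointness. It is exactly the choice of interval half-width $\frac{1}{2n}$, small relative to the integer gaps $s_i\ge1$, that buys this separation; note that this is precisely the spacing already used in Claim~\ref{cla: assignment interval}.

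With disjointness in hand, the first identity follows from the additivity of the Lebesgue integral over disjoint sets. For the second identity, I would simply intersect every region with the common slab $\{l<x_n<u\}$: conjoining the query $(l<X_n<u)$ to each $\Delta_{\vv{a}^n}$ replaces $R_{\vv{a}^n}$ by $R_{\vv{a}^n}\cap\{l<x_n<u\}$, and subsets of pairwise-disjoint sets remain pairwise disjoint. Since $\Delta\land(l<X_n<u)\equiv\bigvee_{\vv{a}^n}\bigl(\Delta_{\vv{a}^n}\land(l<X_n<u)\bigr)$ by the same distributivity, the identical additivity argument gives the second identity. I expect the distributivity and integral-additivity steps to be entirely routine; essentially all the content lives in the disjointness argument above, which is where the specific numerology of the construction (positive integers $s_i$ versus interval widths $\frac{1}{n}$) is used.
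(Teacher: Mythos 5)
Your proposal is correct and follows essentially the same route as the paper's proof: decompose $\Delta$ as the disjunction $\bigvee_{\vv{a}^n}\Delta_{\vv{a}^n}$, observe that the two literals in each clause are mutually exclusive because each $s_i$ is a positive integer while the intervals have width $\frac{1}{n}$, and then invoke additivity of the integral over the resulting disjoint regions (and likewise after conjoining the slab $l<X_n<u$). The only difference is that you spell out the pairwise-disjointness argument in more detail than the paper, which states it in one line.
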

\begin{proof}{(Claim~\ref{cla: MI of delta})}
    Observe that for each clause in $\Delta$, literals are mutually exclusive since each $s_i$ is a positive integer.
    Then we have that formulas $\Delta_{\vv{a}^n}$ are mutually exclusive and meanwhile $\Delta = \bigvee_{\vv{a}^n} \Delta_{\vv{a}^n}$. Thus it holds that $\MI(\Delta) = \sum_{\vv{a}^n} \MI(\Delta_{\vv{a}^n})$.
    Similarly, we have formulas $(\Delta_{\vv{a}^n} \land (l < X_n < u))$'s are mutually exclusive and meanwhile $\Delta \land (l < X_n < u) = \bigvee_{\vv{a}^n} \Delta_{\vv{a}^n} \land (l < X_n < u)$. Thus the second equation holds.
\end{proof}

From the above claims, we can conclude that $\MI(\Delta \land (l < X_n < u)) = t (\frac{1}{n})^n$ where $t$ is the number of assignments $\vv{a}^n$ s.t. $S(\vv{a}^n) = L$.
Notice that for each $\vv{a}^n \in \{0, 1\}^n$, there is a one-to-one correspondance to a subset $S^\prime \subseteq S$ 
by defining $\vv{a}^n$ as $a_i = 1$ if and only if $s_i \in S^\prime$;
and $S(\vv{a}^n)$ equals to $L$ if and only if the sum of elements in $S^\prime$is $L$.
Therefore
$n^n \MI(\Delta \land (l < X_n < u))$ equals to the number of subset $S^\prime \subseteq S$ whose element sum equals to $L$. 

This finishes the proof for the statement that a model integration problem whose tree primal graph has diameter $\bigO(n)$ is \#P-hard.

\end{proof}

\subsection{THEOREM 2 (MI of a formula with primal graph with logarithmic diameter and treewidth two is \#P-Hard)}

\begin{proof}{(Theorem 2)}
Again we prove our complexity result by reducing the \#P-complete variant of the subset sum problem \cite{garey2002computers} to an MI problem over an \smtlra~formula $\theory$ with  primal graph whose diameter is $\bigO(\log n)$ and treewidth two.
In the \#P-complete subset sum problem, we are given a set of positive integers $S = \{s_1, s_2, \cdots, s_n \}$, and a positive integer $L$. The goal is to count the number of subsets $S^\prime \subseteq S$ such that the sum of all the integers in $S^\prime$ equals $L$.
    
First, we reduce this problem in polynomial time to a model integration problem with the following \smtlra~formula $\theory$ where variables are real and $u$ and $l$ are two constants. Its primal graph is shown in Figure~\ref{figure: tree-width 2}.
Consider $n = 2^k$, $n, k \in \mathbb{N}$.

\begin{figure}[!tp]
\centering
\begin{tikzpicture}[grow=right]

\varnode[]{x11}{{\scriptsize$X_{1,1}$}};
\varnode[right=\middist of x11, yshift=-22pt]{x21}{{\scriptsize$X_{2,2}$}};
\varnode[right=\middist of x11, yshift=22pt]{x22}{{\scriptsize$X_{2,1}$}};

\facnode[right=10pt of x11]{s1}{$\boldsymbol{\sum s}$};

\varnode[right=40pt of x22, yshift=30pt]{xk1}{{\scriptsize$X_{k,1}$}};
\varnode[right=\middist of xk1, yshift=-22pt]{xk21}{{\scriptsize$X_{k+1,2}$}};
\varnode[right=\middist of xk1, yshift=22pt]{xk22}{{\scriptsize$X_{k+1,1}$}};

\facnode[right=10pt of xk1]{s2}{$\boldsymbol{\sum s}$};

\varnode[right=40pt of x21, yshift=-30pt]{xkn1}{{\scriptsize$X_{k,\frac{n}{2}}$}};
\varnode[right=\middist of xkn1, yshift=-22pt]{xkn21}{{\scriptsize$X_{k+1,n}$}};
\varnode[right=\middist of xkn1, yshift=22pt]{xkn22}{{\fontsize{5pt}{4pt}{$X_{k+1,n-1}$} }};

\facnode[right=10pt of xkn1]{s3}{$\boldsymbol{\sum s}$};

\facnode[right=\middist of xk22]{s11}{$\boldsymbol{s_1}$};
\facnode[right=\middist of xk21]{s12}{$\boldsymbol{s_2}$};
\facnode[right=12pt of xkn22]{s13}{$\boldsymbol{\tiny{s_{n-1}}}$};
\facnode[right=\middist of xkn21]{s14}{$\boldsymbol{s_n}$};

\edge[] {x11}{x21,x22};
\edge[] {x21}{x22};

\edge[] {xk1}{xk21,xk22};
\edge[] {xk21}{xk22};

\edge[] {xkn1}{xkn21,xkn22};
\edge[] {xkn21}{xkn22};

\path[dashed, draw, line width=\midlinewidth] (x22) -- ($ (x22) !.4! (xk1) $);
\path[dashed, draw, line width=\midlinewidth] (x22) -- ($ (x22) !.35! (xkn1) $);

\path[dashed, draw, line width=\midlinewidth] (x21) -- ($ (x21) !.4! (xkn1) $);
\path[dashed, draw, line width=\midlinewidth] (x21) -- ($ (x21) !.35! (xk1) $);

\path[dashed, draw, line width=\midlinewidth] (xk1) -- ($ (xk1) !.35! (x22) $);
\path[dashed, draw, line width=\midlinewidth] (xkn1) -- ($ (xkn1) !.35! (x21) $);

\path[dashed, draw, line width=\midlinewidth] (xk1) -- ($ (xk1) !.3! (xkn1) $);
\path[dashed, draw, line width=\midlinewidth] (xkn1) -- ($ (xkn1) !.3! (xk1) $);

\path[-,draw=\faccolor,line width=\midlinewidth] (s1) -- ($ (x11) !.5! (x21) $);
\path[-,draw=\faccolor,line width=\midlinewidth] (s1) -- ($ (x21) !.5! (x22) $);
\path[-,draw=\faccolor,line width=\midlinewidth] (s1) -- ($ (x11) !.5! (x22) $);

\path[-,draw=\faccolor,line width=\midlinewidth] (s2) -- ($ (xk1) !.5! (xk21) $);
\path[-,draw=\faccolor,line width=\midlinewidth] (s2) -- ($ (xk21) !.5! (xk22) $);
\path[-,draw=\faccolor,line width=\midlinewidth] (s2) -- ($ (xk1) !.5! (xk22) $);

\path[-,draw=\faccolor,line width=\midlinewidth] (s3) -- ($ (xkn1) !.5! (xkn21) $);
\path[-,draw=\faccolor,line width=\midlinewidth] (s3) -- ($ (xkn21) !.5! (xkn22) $);
\path[-,draw=\faccolor,line width=\midlinewidth] (s3) -- ($ (xkn1) !.5! (xkn22) $);

\path[-,draw=\faccolor,line width=\midlinewidth] (s11) -- ($(s11) !.7! (xk22)$);
\path[-,draw=\faccolor,line width=\midlinewidth] (s12) -- ($(s12) !.7! (xk21)$);
\path[-,draw=\faccolor,line width=\midlinewidth] (s13) -- ($(s13) !.6! (xkn22)$);
\path[-,draw=\faccolor,line width=\midlinewidth] (s14) -- ($(s14) !.7! (xkn21)$);

\end{tikzpicture}
\caption{Primal graph used for \#P-hardness reduction in Theorem 7. We also put blue nodes to indicate that integer $s_i$'s in set $S$ are contained in some clauses and that model integration over some cliques is the sum of some $s_i$'s.}
\label{figure: tree-width 2}
\end{figure}
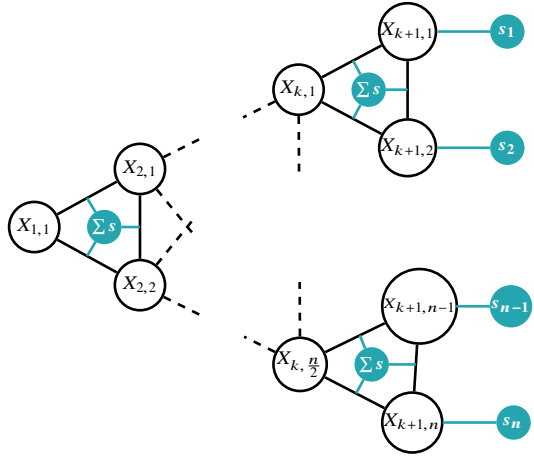

\begin{align*}
    & \theory = \bigwedge_{i \in [n]} (- \frac{1}{4n} < X_{k+1, i} < \frac{1}{4n} \lor - \frac{1}{4n} + s_i < X_{k+1, i} < \frac{1}{4n} + s_i) 
    \bigwedge \theory_t \\
    & \text{where}~\theory_t = \bigwedge_{j \in [k], i \in [2^j] } - \frac{1}{4n} + X_{j+1, 2i-1} + X_{j+1, 2i} < X_{j, i} < \frac{1}{4n} + X_{j+1, 2i-1} + X_{j+1, 2i}
\end{align*}

For brevity, we denote all the variables by $\X$ and denote the literal $- \frac{1}{4n} < X_{k+1,i} < \frac{1}{4n}$ by $\lit(i, 0)$ and literal $- \frac{1}{4n} + s_i < X_{k+1,i} < \frac{1}{4n} + s_i$ by $\lit(i, 1)$ respectively.
Also We choose two constants $l = L - \frac{1}{2}$ and $u = L + \frac{1}{2}$.
In the following, we prove that
$(2n)^{2n-1} \MI(\Delta \land (l < X_{1,1} < u))$ equals to the number of subset $S^\prime \subseteq S$ whose element sum equals to $L$, which indicates that model integration problem with primal graph whose diameter is $\bigO(\log n)$ and treewidth two is \#P-hard.
    
Let $\vv{a}^n = (a_1, a_2, \cdots, a_n) \in \{0,1\}^n$ be some assignment to Boolean variables $(A_1, A_2, \cdots, A_n)$. Given an assignment $\vv{a}^n$, define the sum as $S(\vv{a}^n) \triangleq \sum_{i=1}^n a_i s_i$, and formula as $\Delta_{\vv{a}^n} \triangleq \bigwedge_{i=1}^n \lit(i, a_i) \land \theory_t$.
    
\begin{cla}\label{cla: assignment interval 2}
    The model integration for formula $\Delta_{\vv{a}^n}$ with given $\vv{a}^n \in \{0, 1\}^n$ is
    $\MI(\Delta_{\vv{a}^n}) = (\frac{1}{2n})^{2n - 1}$.
    Moreover, for each variable $X_{j,i}$ in formula $\Delta_{\vv{a}^n}$, its satisfying assignments consist of the interval $[\sum_l a_l s_l - \frac{2^{k-j+2}-1}{4n},\sum_l a_l s_l + \frac{2^{k-j+2}-1}{4n}]$ where $l \in \{ l \mid X_{k+1, l} \text{ is a descendant of } X_{j,i} \}$.
    Specifically, the satisfying assignments for the root variable $X_{1,1}$ can be denoted the interval $[S(\vv{a}^n) - \frac{2n-1}{4n}, S(\vv{a}^n) + \frac{2n-1}{4n}] \subset [S(\vv{a}^n) - \frac{1}{2}, S(\vv{a}^n) + \frac{1}{2}]$.
\end{cla}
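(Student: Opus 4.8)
The plan is to mirror the structure of Claim~\ref{cla: assignment interval}, but to exploit the balanced binary-tree shape of $\graph_{\theory}$ rather than a chain. First I would record the combinatorics of the tree: the leaves are the $n=2^{k}$ variables $X_{k+1,i}$, while the internal nodes are the $X_{j,i}$ with $j\in[k]$, of which there are $\sum_{j=1}^{k}2^{j-1}=n-1$; hence $\Delta_{\vv{a}^n}$ has exactly $2n-1$ variables, matching the target exponent. For the value $\MI(\Delta_{\vv{a}^n})=(\tfrac{1}{2n})^{2n-1}$ I would integrate the variables in a root-to-leaf order. The key observation is that each node is the \emph{constrained} variable in exactly one clause: an internal node $X_{j,i}$ only in its band $X_{j+1,2i-1}+X_{j+1,2i}-\tfrac{1}{4n}<X_{j,i}<X_{j+1,2i-1}+X_{j+1,2i}+\tfrac{1}{4n}$, and a leaf $X_{k+1,i}$ only in $\lit(i,a_i)$. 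Integrating $X_{j,i}$ first (while its children are still free) therefore integrates the constant $1$ over an interval of width $\tfrac{1}{2n}$, \emph{independent} of the children's values, and deletes that clause. Peeling the tree from the root down, each of the $2n-1$ integrations contributes a factor $\tfrac{1}{2n}$, yielding the claimed product; Fubini/Tonelli applies since the integrand is the nonnegative indicator of a bounded polytope.

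For the support statement I would argue by induction on the level $j$, from the leaves ($j=k+1$) up to the root. At a leaf, $\lit(i,a_i)$ restricts $X_{k+1,i}$ to an interval of half-width $\tfrac{1}{4n}$ centered at $a_i s_i$, which is the base case since $\tfrac{2^{k-(k+1)+2}-1}{4n}=\tfrac{1}{4n}$. For the inductive step, the two subtrees rooted at the children $X_{j+1,2i-1}$ and $X_{j+1,2i}$ are variable-disjoint, so feasibility factorizes and the children range independently over their inductive intervals, centered at the descendant-leaf sums $c_1,c_2$ with common half-width $w_{j+1}$. Their sum then ranges over the Minkowski-sum interval centered at $c_1+c_2$ of half-width $2w_{j+1}$, and the band widens this by $\tfrac{1}{4n}$, so $X_{j,i}$ is feasible exactly on the interval centered at $c_1+c_2=\sum_l a_l s_l$ with half-width $w_j=2w_{j+1}+\tfrac{1}{4n}$. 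Solving this recurrence with $w_{k+1}=\tfrac{1}{4n}$ gives $w_j=\tfrac{2^{k-j+2}-1}{4n}$, the claimed half-width. Part three is the special case $j=1$: since $2^{k}=n$ the half-width equals $\tfrac{2n-1}{4n}<\tfrac{1}{2}$, giving the stated root interval and its containment in $[S(\vv{a}^n)-\tfrac{1}{2},\,S(\vv{a}^n)+\tfrac{1}{2}]$.

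The step I expect to need the most care is the inductive claim that the two children can \emph{simultaneously} realize every pair of values in their respective intervals; this is precisely what licenses replacing the feasible range of $X_{j+1,2i-1}+X_{j+1,2i}$ by the full Minkowski sum of two intervals. It rests on the primal graph being a tree, so the subtrees hanging off the two children share no variables and their feasible regions are independent. A secondary bookkeeping point is verifying that the recurrence $w_j=2w_{j+1}+\tfrac{1}{4n}$ telescopes to exactly $\tfrac{2^{k-j+2}-1}{4n}$, and in particular that it stays strictly below $\tfrac{1}{2}$ at the root, so that the query interval $(l,u)$ cleanly separates feasible sums $S(\vv{a}^n)$ differing by at least one integer — the property that will subsequently drive the subset-sum counting argument.
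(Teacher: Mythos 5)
Your proposal is correct and follows essentially the same route as the paper's proof: the same root-innermost iterated integration yielding a factor $\tfrac{1}{2n}$ per variable (with $2n-1$ variables total), and the same leaves-to-root induction with the recurrence $w_j = 2w_{j+1} + \tfrac{1}{4n}$ telescoping to $\tfrac{2^{k-j+2}-1}{4n}$. The one place you go beyond the paper is in explicitly justifying the Minkowski-sum step via the variable-disjointness of the two child subtrees, a point the paper's induction leaves implicit.
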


\begin{proof}{(Claim~\ref{cla: assignment interval 2})}

First we prove that $\MI(\Delta_{\vv{a}^n}) = (\frac{1}{2n})^{2n - 1}$. For brevity, denote $a_i s_i$ by $\hat{s_i}$. By definition of model integration and the fact that the integral is absolutely convergent (since we are integrating a constant function, i.e., one, over finite volume regions), 
we have the following equations
\begin{align*}
        \MI(\Delta_{\vv{a}^n}) 
        &=
        \int_{\x \models \Delta_{\vv{a}^n}} 1~\dif \X \\
        &= 
        \int_{-\frac{1}{4n} + \hat{s}_n}^{\frac{1}{4n} + \hat{s}_n} \dif x_{k+1, n}
        \cdots
        \int_{-\frac{1}{4n} + \hat{s}_1}^{\frac{1}{4n} + \hat{s}_1} \dif x_{k+1, 1}
        \int_{- \frac{1}{4n} + x_{k+1, n-1} + x_{k+1, n}}^{\frac{1}{4n} + x_{k+1, n-1} + x_{k+1, n}} \dif x_{k, 2^{k-1}} 
        \cdots
        \int_{- \frac{1}{4n} + x_{2, 1} + x_{2, 2}}^{\frac{1}{4n} + x_{2, 1} + x_{2, 2}} 1 ~\dif x_{1,1} ~.
\end{align*}

Observe that for the most inner integration over variable $x_{1,1}$, the integration result is $\frac{1}{2n}$. By doing this iteratively, we have that $\MI(\theory_{\vv{a}^k}) = (\frac{1}{2n})^{2n-1}$ where the $2n-1$ comes from the number of variables.

Then we prove that satisfying assignments for variable $X_{j,i}$ in formula $\Delta_{\vv{a}^n}$ lie in the interval $[\sum_l a_l s_l - \frac{2^{k-j+2}-1}{4n},\sum_l a_l s_l + \frac{2^{k-j+2}-1}{4n}]$ where $l \in \{ l \mid X_{k+1, l} \text{ is a descendant of } X_{j,i} \}$ by performing mathematical induction in a bottom-up way.

For $j=1$, any variable $X_{k+2-j,i}$ with $i \in [2^{k+2-j}]$ has satisfying assignments consisting of the interval $[a_i s_i - \frac{1}{4n}, a_i s_i + \frac{1}{4n}]$. Thus the statement holds for this case.

Suppose that the statement holds for $j=m$, that is, for any $i \in [2^{k+2-m}]$, any variable $X_{k+2-m,i}$ has satisfying assignments consisting interval $[\sum_l a_l s_l - \frac{2^m - 1}{4n},\sum_l a_l s_l + \frac{2^{m}-1}{4n}]$ where $l \in \{ l \mid X_{k+1, l} \text{ is a descendant of } X_{k+2-m,i} \}$.

Then for $j=m+1$ and any $i \in [2^{k+1-m}]$, the variable $X_{k+1-m,i}$ has two descendants, variable $X_{k+2-m,2i-1}$ and variable $X_{k+2-m,2i}$.
Moreover, we have that $-\frac{1}{4n} + X_{k+2-m,2i-1} + X_{k+2-m,2i} < X_{k+1-m,i} < \frac{1}{4n} + X_{k+2-m,2i-1} + X_{k+2-m,2i}$. Then the lower bound of the interval for variable $X_{k+1-m,i}$ is $-\frac{1}{4n} + \sum_l a_l s_l - 2\frac{2^m - 1}{4n} = \sum_l a_l s_l - \frac{2^{m+1} - 1}{4n}$; similarly the upper bound of the interval is $\sum_l a_l s_l + \frac{2^{m+1} - 1}{4n}$, where $l \in \{ l \mid X_{k+1,l} \text{ is a descendant of } X_{k+1-m,i} \}$. That is, the statement also holds for $j = m+1$ which finishes our proof.
\end{proof}

The above claim shows what the model integration of formula $\theory_{\vv{a}^k}$ is like. We'll show in the next claim what the model integration of formula $\theory_{\vv{a}^n}$ conjoined with a query $l < X_{1,1} < u$ is like.

\begin{cla}\label{cla: explicit MI of delta with bdds 2}
    For each assignments $\vv{a}^n \in \{0, 1\}^n$, the model integration of
    $\Delta_{\vv{a}^n} \land (l < X_{1,1} < u)$ falls into one of the following cases:
    \begin{itemize}
        \item If $S(\vv{a}^n) < L$ or $S(\vv{a}^n) > L$, then $\MI(\Delta_{\vv{a}^n} \land (l < X_{1,1} < u)) = 0$.
        \item If $S(\vv{a}^n) = L$, then $\MI(\Delta_{\vv{a}^n} \land (l < X_{1,1} < u)) = (\frac{1}{2n})^{2n - 1}$.
    \end{itemize}
\end{cla}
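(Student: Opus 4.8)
The plan is to reduce this claim entirely to the interval characterization already established in Claim~\ref{cla: assignment interval 2}, mirroring the argument used for the chain construction in Claim~\ref{cla: explicit MI of delta with bdds}. First I would record the root interval $J_{\vv{a}^n} := [S(\vv{a}^n) - \frac{2n-1}{4n},\, S(\vv{a}^n) + \frac{2n-1}{4n}]$ supplied by that claim, and observe that its half-width is $\frac{2n-1}{4n} = \frac{1}{2} - \frac{1}{4n}$, which is \emph{strictly} less than $\frac{1}{2}$. The whole point of the $\frac{1}{4n}$ slack in the construction is that $J_{\vv{a}^n}$ then sits strictly inside $(S(\vv{a}^n) - \frac{1}{2},\, S(\vv{a}^n) + \frac{1}{2})$, so I never have to worry about open-versus-closed boundary effects at the query bounds $l, u$.

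Next I would make the geometric reduction explicit. Since Claim~\ref{cla: assignment interval 2} identifies the projection of the feasible region of $\Delta_{\vv{a}^n}$ onto the $X_{1,1}$-axis with exactly $J_{\vv{a}^n}$, conjoining the query $l < X_{1,1} < u$ intersects that region with the slab $\{\,l < X_{1,1} < u\,\}$. Hence if $J_{\vv{a}^n} \cap (l,u) = \emptyset$ the intersected region is empty and $\MI(\Delta_{\vv{a}^n} \land (l < X_{1,1} < u)) = 0$, whereas if $J_{\vv{a}^n} \subseteq (l,u)$ the slab removes nothing, so $\MI(\Delta_{\vv{a}^n} \land (l < X_{1,1} < u)) = \MI(\Delta_{\vv{a}^n}) = (\tfrac{1}{2n})^{2n-1}$, the last value again coming from Claim~\ref{cla: assignment interval 2}.

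The remaining step is an arithmetic case split driven by the integrality of $S(\vv{a}^n)$ and $L$ together with $l = L - \tfrac{1}{2}$ and $u = L + \tfrac{1}{2}$. If $S(\vv{a}^n) < L$ then $S(\vv{a}^n) \le L - 1$, so the right end of $J_{\vv{a}^n}$ is at most $(L-1) + (\tfrac{1}{2} - \tfrac{1}{4n}) = l - \tfrac{1}{4n} < l$, placing $J_{\vv{a}^n}$ entirely to the left of the slab; symmetrically, $S(\vv{a}^n) > L$ forces $S(\vv{a}^n) \ge L + 1$ and hence the left end at least $u + \tfrac{1}{4n} > u$. Both disjoint cases yield MI zero. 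If $S(\vv{a}^n) = L$, then $J_{\vv{a}^n} \subset (L - \tfrac{1}{2}, L + \tfrac{1}{2}) = (l,u)$ by the strict half-width bound, giving the value $(\tfrac{1}{2n})^{2n-1}$. I expect the only genuinely non-trivial ingredient to be the geometric reduction, namely justifying that containment or disjointness of the one-dimensional projection alone determines whether the integral is unchanged or vanishes; the strict inequality $\frac{2n-1}{4n} < \frac{1}{2}$ is exactly what keeps that reduction clean, and everything else is the same bookkeeping as in the $\bigO(n)$-diameter case.
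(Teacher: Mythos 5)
Your proposal is correct and follows essentially the same route as the paper's proof: both rest on the interval characterization of the root variable $X_{1,1}$ from Claim~\ref{cla: assignment interval 2}, exploit the integrality of $S(\vv{a}^n)$ and $L$ with the strict half-width bound $\frac{2n-1}{4n} < \frac{1}{2}$ to show the interval is either disjoint from or contained in $(l,u)$, and conclude the two cases accordingly. Your explicit justification of the geometric reduction (that containment or disjointness of the one-dimensional projection alone determines the integral) is a welcome clarification of a step the paper leaves implicit, but it is not a different argument.
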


\begin{proof}{(Claim~\ref{cla: explicit MI of delta with bdds 2})}
From previous Claim~\ref{cla: assignment interval 2}, it is shown that variable $X_{1,1}$ has its satisfying assignments in the interval $[S(\vv{a}^n) - \frac{2n - 1}{4n}, S(\vv{a}^n) + \frac{2n - 1}{4n}]$ in formula $\Delta_{\vv{a}^n}$ for each $\vv{a}^n \in \{0,1\}^n$.

If $S(\vv{a}^n) < L$, given that $S(\vv{a}^n)$ is a sum of positive integers, then it holds that $S(\vv{a}^n) + \frac{1}{2} \leq (L-1) + \frac{2n - 1}{4n} < L - \frac{1}{2} = l$ and therefore, $\MI(\Delta_{\vv{a}^n} \land (l < X_{1,1} < u)) = 0$;
similarly, if $S(\vv{a}^n) > L$, then it holds that $S(\vv{a}^n) - \frac{1}{2} > u$ and therefore, $\MI(\Delta_{\vv{a}^n} \land (l < X_{1,1} < u)) = 0$.
If $S(\vv{a}^n) = L$, then by Claim~\ref{cla: assignment interval 2} we have that the satisfying assignment interval is inside the interval $[l, u]$ and thus it holds that $\MI(\Delta_{\vv{a}^n} \land (l < X_{1,1} < u)) = \MI(\Delta_{\vv{a}^n}) = (\frac{1}{2n})^{2n-1}$.
\end{proof}

\begin{cla}\label{cla: MI of delta 2}
    The following two equations hold:
    \begin{enumerate}
        \item $\MI(\Delta) = \sum_{\vv{a}^n} \MI(\Delta_{\vv{a}^n})$.
        \item $\MI(\Delta \land (l < X_{1,1} < u)) = \sum_{\vv{a}^n} MI(\Delta_{\vv{a}^n} \land (l < X_{1,1} < u))$.
    \end{enumerate}
\end{cla}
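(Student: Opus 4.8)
The plan is to mirror the argument used for Claim~\ref{cla: MI of delta} in the proof of Theorem~1: decompose $\Delta$ into a disjunction of the mutually exclusive formulas $\Delta_{\vv{a}^n}$ and invoke additivity of model integration over disjoint regions. The shared conjunctive block $\theory_t$ plays no role in the branching, so the only structure that matters is the family of $n$ binary clauses on the leaf variables $X_{k+1,i}$.

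First I would establish that within each clause the two literals $\lit(i,0)$ and $\lit(i,1)$ are mutually exclusive. Indeed, $\lit(i,0)$ constrains $X_{k+1,i}$ to the open interval $(-\tfrac{1}{4n}, \tfrac{1}{4n})$ and $\lit(i,1)$ to $(s_i - \tfrac{1}{4n}, s_i + \tfrac{1}{4n})$; since each $s_i$ is a positive integer, $s_i \geq 1 > \tfrac{1}{2n}$ for every $n \geq 1$, which exceeds the sum of the two half-widths, so the intervals are disjoint and no value of $X_{k+1,i}$ satisfies both. Because $\theory_t$ is common to all choices, distributing the $n$ binary clauses over $\theory_t$ yields $\Delta = \bigvee_{\vv{a}^n \in \{0,1\}^n} \Delta_{\vv{a}^n}$, and the disjuncts are pairwise mutually exclusive: two distinct assignments $\vv{a}^n \neq \vv{b}^n$ differ in some coordinate $i$, which forces $X_{k+1,i}$ into disjoint intervals, so the corresponding satisfying regions in $\mathbb{R}^{|\X|}$ have empty intersection.

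Next I would appeal to additivity of the integral over a disjoint union. As in the earlier claims, the integrand in $\MI$ is the constant $1$ and the relevant regions have finite volume (cf.\ Claim~\ref{cla: assignment interval 2}), so the integral is absolutely convergent. Since the satisfying sets of the $\Delta_{\vv{a}^n}$ are pairwise disjoint and their union is exactly the satisfying set of $\Delta$, we obtain $\MI(\Delta) = \sum_{\vv{a}^n} \MI(\Delta_{\vv{a}^n})$, the first equation. For the second equation, conjoining $(l < X_{1,1} < u)$ imposes the very same additional restriction on every disjunct, so the regions of $\Delta_{\vv{a}^n} \land (l < X_{1,1} < u)$ remain pairwise disjoint and their union is the satisfying set of $\Delta \land (l < X_{1,1} < u)$; additivity again gives $\MI(\Delta \land (l < X_{1,1} < u)) = \sum_{\vv{a}^n} \MI(\Delta_{\vv{a}^n} \land (l < X_{1,1} < u))$.

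The claim is essentially routine, so there is no genuinely hard step; the only point requiring care is verifying that mutual exclusivity of the leaf clauses — a one-dimensional, per-variable fact — genuinely propagates to disjointness of the full high-dimensional regions \emph{despite} the shared constraints $\theory_t$. The key observation is that $\theory_t$ never creates additional feasible overlap: it only couples the internal variables to the (already disjoint) leaf values through equalities up to a $\tfrac{1}{4n}$ slack, so any two regions that disagree on a single leaf variable stay disjoint regardless of how $\theory_t$ constrains the remaining variables.
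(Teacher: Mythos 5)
Your proposal is correct and follows essentially the same route as the paper's own proof: mutual exclusivity of $\lit(i,0)$ and $\lit(i,1)$ from the $s_i$ being positive integers, the decomposition $\Delta = \bigvee_{\vv{a}^n} \Delta_{\vv{a}^n}$, and additivity of the integral over the resulting disjoint regions. You merely spell out in more detail (e.g., the interval-disjointness computation and the observation that $\theory_t$ cannot reintroduce overlap) what the paper states tersely.
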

\begin{proof}{(Claim~\ref{cla: MI of delta 2})}
    Observe that for each pair of literals $\lit(i,0)$ and $\lit(i,1), i \in [n]$, literals are mutually exclusive since each $s_i$ is a positive integer.
    Then we have that formulas $\Delta_{\vv{a}^n}$ are mutually exclusive and meanwhile formula $\Delta = \bigvee_{\vv{a}^n} \Delta_{\vv{a}^n}$. Thus it holds that $\MI(\Delta) = \sum_{\vv{a}^n} \MI(\Delta_{\vv{a}^n})$.
    Similarly, we have formulas $(\Delta_{\vv{a}^n} \land (l < X_{1,1} < u))$'s are mutually exclusive and meanwhile $\Delta \land (l < X_{1,1} < u) = \bigvee_{\vv{a}^n} \Delta_{\vv{a}^n} \land (l < X_{1,1} < u)$. Thus the second equation holds.
\end{proof}

From the above claims, we can conclude that $\MI(\Delta \land (l < X_{1,1} < u)) = t (\frac{1}{2n})^{2n-1}$ where $t$ is the number of assignments $\vv{a}^n$ s.t. $S(\vv{a}^n) = L$.
Notice that for each $\vv{a}^n \in \{0, 1\}^n$, 
there is a one-to-one correspondence to a subset $S^\prime \subseteq S$ by defining $\vv{a}^n$ as $a_i = 1$ if and only if $s_i \in S^\prime$;
and $S(\vv{a}^n)$ equals to $L$ if and only if the sum of elements in $S^\prime$ is $L$.
Therefore 
$(2n)^{2n-1} \MI(\Delta \land (l < X_{1,1}  < u))$ equals to the number of subset $S^\prime \subseteq S$ whose element sum equals to $L$. 

This finishes the proof for the statement that 
a model integration problem with primal graph whose diameter is $\bigO(\log n)$ and treewidth two is \#P-hard.
\end{proof}

\subsection{PROPOSITION 3 (MI via message passing)}
\begin{proof}{(Proposition 3)}

By the definition of downward pass beliefs and messages, we have that the downward pass belief $\bel_{i^*}$ of a node $i^*$ can be written as follows
\begin{align*}
    \bel_{i^*}(x_{i^*}) 
    &=\prod_{j \in \neigh({i^*})} \msg{j}{i^*}{}(x_{i^*}) 
    =\prod_{j \in \neigh({i^*})} \int_{\mathbb{R}} \id{x_{i^*},x_{j}\models\theory_{i^*,j}}\id{x_j\models\theory_{j}} \prod_{c\in\neigh({j})\setminus\{{i^*}\}}\msg{c}{j}{}(x_j) ~d x_{j} \\
    &= \int_{\mathbb{R}^{ \mid \neigh({i^*}) \mid }} \prod_{({i^*}, j) \in \E } \id{x_{i^*},x_{j}\models\theory_{{i^*},j}}\id{x_j\models\theory_{j}} \prod_{(j, c) \in \E, c \neq {i^*}  }\msg{c}{j}{}(x_j) ~d \x_{i^*}~,
\end{align*}
where the last equality comes from interchanging integration with product, and $\x_{i^*}$ is defined as $\x_{i^*} = \{ x_j \mid (i^*, j) \in \E \}$.
By doing this recursively, i.e. plugging in the messages as defined in Equation 7, the belief of node $i^*$ can be expressed as follows
\begin{align*}
    \bel_{i^*}(x_{i^*}) 
    &= \int_{\mathbb{R}^{|\X|-1}} \prod_{(i^*, j) \in \E} \id{x_{i^*},x_{j}\models\theory_{i^*,j}}
    \prod_{j \in \Va \setminus\{i^*\}}\id{x_j\models\theory_{j}} ~d \x\setminus\{x_{i^*}\} =\int_{\mathbb{R}^{|\X|-1}}\id{\x\models\theory} ~d \x\setminus\{x_{i^*}\}~.  
\end{align*}
The last equality comes from the fact that the formula $\theory$ has a tree primal graph $\graph_{\theory}$, i.e. $\theory = \land_{(i, j) \in \E} \theory_{i, j} \land_{i \in \Va} \theory_i$.
Recall the definition of MI as defined in Equation 6, we have that the final belief $\bel_{i^*}$ is the unnormalized marginal of variable $X_{i^*} \in \X$, i.e. $\bel_{i^*}(x_{i^*})=p_{\theory}(x_{i^*})\cdot \MI(\theory)$.
Besides, this also indicates that the integration over the belief of $X_{i^*}$ is equal to the MI of formula $\theory$.
\end{proof}

\subsection{PROPOSITION 4 (Messages and beliefs)}
\begin{proof}{(Proposition 4)}

This follows by induction on both the level of the node and the number of its neighbors. Consider the base case of a node $i$ with only one neighbor $j$ being the leaf node.
Then the message sent from node $j$ to node $i$ would be
$\msg{i}{j}{}(x_j) = \int_{\mathbb{R}} \id{x_{i}, x_{j}\models\theory_{i,j}}\id{x_i\models\theory_{i}}\ d x_{i}$.
This integral has one as an integrand over pieces that satisfy the logical constraints $\id{x_{i}, x_{j}\models\theory_{i,j}}\id{x_i\models\theory_{i}}$ with integration bounds linear in variable $x_j$.
Therefore the resulting message from node $j$ to node $i$ is a piecewise linear function in variable $x_j$.
Since node $i$ has only one child by assumption, its upward-pass belief is also piecewise univariate polynomial.

From here, the proof follows for any message and belief for more complex tree structures by considering that the piecewise polynomial family is closed under multiplication and integration.
\end{proof}

\subsection{PROPOSITION 5 (Univariate queries via message passing)}
\begin{proof}{(Proposition 5)}

For an SMT($\lra$) query $\Phi$ over a variable $X_{i}\in\X$, the MI over formula $\theory$ conjoined with query $\Phi$ can be expressed as follows by the definition of model integration.
\begin{align*}
    \MI(\theory\wedge\Phi)=\int_{\mathbb{R}^{|\X|}}\id{\x\models\theory \wedge\Phi } ~d \x 
    =\int_{\mathbb{R}^{|\X|}}\id{x_{i}\models\Phi} \id{\x\models\theory } ~d \x\setminus\{x_{i}\}d x_{i}~.
\end{align*}

Notice that by the proof of Proposition 2, we have that the downward pass belief of node $i$ is
$\bel_i(x_i) = \int_{\mathbb{R}^{|\X|-1}}\id{\x\models\theory} ~d \x\setminus\{x_{i}\}$.
By plugging the belief $\bel_i$ in the above equation of MI over formula $\theory \land \Phi$,
we have that
\begin{align*}    
    \MI(\theory\wedge\Phi)
    =\int_{\mathbb{R}}\id{x_{i}\models\Phi}\id{x_{i}\models\theory_{i}}\bel_{i}(x_i) d x_{i}~.
\end{align*}
\end{proof}

\subsection{PROPOSITION 6 (Bivariate queries via message passing)}
\begin{proof}{(Proposition 6)}

Denote the \smtlra~formula $\theory \land \Phi$ by $\theory^*$ where $\Phi$ is an SMT($\lra$) query over variables $X_{i^*}, X_{j^*}\in\X$. We also denote the belief and messages in formula $\theory^*$ by $\bel_i^*$ and $\msg{i}{j}{*}$ respectively.

Notice that since query $\Phi$ is defined over variables $X_{i^*}, X_{j^*}$, then it holds that 
for any $(i, j) \in \E$, $\theory_{i, j} = \theory_{i, j}^*$ if $(i, j) \neq (i^*, j^*)$; 
else $\theory_{i^*, j^*}^* = \theory_{i^*, j^*} \land \Phi$.
Also for any $i \in \Va$, it holds that $\theory_i = \theory_i^*$.
Therefore, we have that $\bel_{j^*}^*(x_{j^*}) / \msg{i^*}{j^*}{*} = \bel_{j^*}(x_{j^*}) / \msg{i^*}{j^*}{}$ by the definition of beliefs and messages.
Moreover, we can compute the message sent from node $j^*$ to node $i^*$ in formula $\theory^*$ as follows:
\begin{align*}
    \msg{j^*}{^*i}{*}(x_{i^*}) 
    &= \int_{\mathbb{R}} \bel^*_{j^*}(x_{j^*}) / \msg{i^*}{j^*}{*}(x_{j^*}) \times
    \id{x_{i^*}, x_{j^*}\models \theory_{i^*,j^*}^* }\id{x_{j^*}\models\theory_{j^*}} ~d x_{j^*} \\
    &= \int_{\mathbb{R}} \bel_{j^*}(x_{j^*}) / \msg{i^*}{j^*}{}(x_{j^*}) \times
    \id{x_{i^*}, x_{j^*}\models \theory_{i^*,j^*}\land\Phi }\id{x_{j^*}\models\theory_{j}} ~d x_{j^*}.
\end{align*}
Similarly, we have that the final belief on node 
$i^*$ is as follows:
\begin{align*}
    \bel^*_{i^*}(x_{i^*}) = \prod_{j \in \neigh(i^*)} \msg{j}{i^*}{*}(x_{i^*}) = \msg{i^*}{j^*}{*}(x_{i^*}) \prod_{j \in \neigh(i^*), j\neq j^*} \msg{j}{i^*}{}(x_{i^*}).   
\end{align*}
Then the MI over formula $\theory \land \Phi$ can be computed by doing $\MI(\theory \land \Phi) = \int_{\mathbb{R}} \id{x_{i^*} \models \theory_{i^*}} \cdot \bel_{i^*}^*(x_{i^*}) ~d x_{i^*}$
where messages except $\msg{i^*}{j^*}{*}$ are pre-computed and the computation of the message $\msg{i^*}{j^*}{*}$ can reuse the pre-computed beliefs as shown above.

\end{proof}

\subsection{PROPOSITION 7 (Statistical moments via message passing)}
\begin{proof}{(Proposition 7)}

By the definition of the k-th moment of the random variables and Proposition 2 that belief ~$\bel_{i}$~ of node $i$ is the unnormalized marginal $p_{i}(x_{i})$ of variable $X_i \in \X$, we have that
\begin{align*}
    \mathbb{E}[X_i^k] =\int_{\mathbb{R}}\id{x_{i}\models\theory_{i}} \times x_i^k p_{\theory}(x_{i}) ~d x_{i} 
    = \frac{1}{\MI(\theory)} \int_{\mathbb{R}}\id{x_{i}\models\theory_{i}} \times x_i^k \bel_{i}(x_i) ~d x_{i}~.
\end{align*}
\end{proof}

\end{document}